\documentclass[sigconf,nonacm]{acmart}

\setcopyright{none}
\acmConference{}{}{}
\acmBooktitle{}
\acmDOI{}
\acmISBN{}
\acmYear{}

\AtBeginDocument{%
  
}

\usepackage[ruled,vlined,linesnumbered]{algorithm2e}
\usepackage{placeins}

\begin{document}

\title{Active Localization of Unstable Systems with Coarse Information}

\author{Ege Yuceel, Daniel Liberzon, Sayan Mitra}
\affiliation{%
  \institution{University of Illinois Urbana-Champaign}
  \department{Coordinated Science Laboratory}
  \city{Urbana}
  \state{IL}
  \country{USA}
}
\email{{eyceel2,liberzon,mitras}@illinois.edu}

\begin{abstract}

We study localization and control for unstable systems under coarse, single-bit sensing. Motivated by understanding the fundamental limitations imposed by such minimal feedback, we identify sufficient conditions under which the initial state can be recovered despite instability and extremely sparse measurements. Building on these conditions, we develop an active localization algorithm that integrates a set-based estimator with a control strategy derived from Voronoi partitions, which provably estimates the initial state while ensuring the agent remains in informative regions. Under the derived conditions, the proposed approach guarantees exponential contraction of the initial-state uncertainty, and the result is further supported by numerical experiments. These findings can offer theoretical insight into localization in robotics, where sensing is often limited to coarse abstractions such as keyframes, segmentations, or line-based features.

\end{abstract}

\maketitle

\theoremstyle{remark}
\newtheorem{remark}{Remark}
\newtheorem*{informalthm}{Theorem (Preview of Theorem~6.3)}


\begin{CCSXML}
<ccs2012>
<concept>
<concept_id>10003752.10003753.10003765</concept_id>
<concept_desc>Theory of computation~Timed and hybrid models</concept_desc>
<concept_significance>500</concept_significance>
</concept>
</ccs2012>
\end{CCSXML}
\begin{CCSXML}
<ccs2012>
   <concept>
       <concept_id>10010520.10010553.10010554.10010556</concept_id>
       <concept_desc>Computer systems organization~Robotic control</concept_desc>
       <concept_significance>500</concept_significance>
       </concept>
 </ccs2012>
\end{CCSXML}

\ccsdesc[300]{Theory of computation~Timed and hybrid models}
\ccsdesc[500]{Computer systems organization~Robotics}

\keywords{Active localization, set-based estimation, control, robotics}

\section{Introduction}
\label{sec:intro}
Precise localization with  limited sensing is a fundamental challenge in many engineering systems. Contemporary  robotic solutions---Visual Inertial Odometry (VIO)~\cite{mourikis2007multi,bloesch2015robust} and Simultaneous Localization and Mapping (SLAM)~\cite{dissanayake2001solution,thrun2002probabilistic,engel2017direct}---typically rely on rich, continuous streams from cameras, inertial measurements units (IMU), gyroscopes, and often GPS.  In contrast, we study the extreme sensing regime: localization from discrete, intermittent, and sparse measurements, motivated by lightweight, low-power agents and minimalistic robotics~\cite{sabatini2013low,lin2020set,liberzon2025indistinguishability,sakcak2024mathematical}.

%
Prior theoretical  results have characterized which states are observable using combinations of visual and intertial measurements~\cite{bryson2008observability,jones2011visual,kelly2009visual}.
For example, it has been shown that the velocity, the absolute scale, the gravity vector in the local frame are observable ~\cite{martinelli2014closed}.
Prior work in quantized and hybrid control has  examined  stabilizability and observability~\cite{liberzon2003hybrid,sur1998state}. 
However, theoretical understanding of the limits on accuracy of localization with  coarse, binary sensing had remained unexplored until recently.

Concretely, we study the problem of localizing  an agent with known discrete-time dynamics \(x_{k+1} = f(x_k, u_k)\). The true  state $x_k$ is unobserved. At each step $k$, the agent receives a binary observation 
$y_k = h(x_k, m)$, indicating whether $x_k$ lies near a fixed landmark at location $m$. Although  $x_0$ and $m$ are unknown, they are known to be in the prior sets  $X_0$ and $M$, respectively.  
The goal is to design an algorithm that iteratively computes $\langle u_k, \hat{X}_{0,k}\rangle = g(y_0, \ldots, y_k, u_0, \ldots, u_{k-1})$, where $u_k$ is the control input and $\hat{X}_{0,k}$ is an estimate for $x_0$ computed after step $k$. For accurate estimation\footnote{Accurately estimating the initial state $x_0$ implies that the current state $x_k$ can also be estimated by propagating the dynamics and the control forward.},  we would like $\hat{X}_{0,k}$ to converge to  $x_0$. 
The challenge arises from the coarseness of binary observations and the instability of the dynamics: the former provides limited state information, while the latter inflates uncertainty, reducing the likelihood of receiving informative measurements.

In a recent paper~\cite{liberzon2025indistinguishability}, the authors characterize lower bounds on localization accuracy in terms of \emph{indistinguishable states} and relate them to Kalman’s observability decomposition. A pair of location–landmark configurations \((x_1, m_1)\) and \((x_2, m_2)\) is said to be \emph{indistinguishable} if they yield identical measurements for every possible sequence of control inputs. It is shown that no localization algorithm can shrink the initial state estimate \(\hat{X}_{0,k}\) beyond the limit imposed by the indistinguishable set (without additional knowledge of \(m\)). For example, for volume-preserving dynamics (such as \(f(x_k, u_k) = x_k + u_k\)), the paper shows that the indistinguishable sets are not singletons, and therefore, perfect localization is impossible. Instead, the indistinguishable sets correspond to diagonals; that is, a configuration \((x_1, m_1)\) is indistinguishable from \((x_0, m)\) if \(x_1 - m_1 = x_0 - m\). It is worth noting that finding  algorithms like $g$ that compute the control sequences to achieve the  localization, up to the possible level of accuracy, remained an open problem. 

In this paper, we consider linear unstable dynamics and seek algorithms that solve the localization problem optimally, in the sense of achieving the smallest possible uncertainty set. Given the unstable dynamics, at first glance, one might expect larger indistinguishable sets and worse localization performance. Somewhat counter-intuitively, we show that the indistinguishable sets collapse to singletons, implying that the initial state can, in principle, be recovered exactly. This is the first contribution of this work: We show that under unstable dynamics, any incorrect initial state guess \(x_1 \in \hat{X}_0\) with \(x_1 \neq x_0\) produces a trajectory that eventually diverges
from the true trajectory, and therefore, if the true trajectory can be made to produce positive measurements, then $x_1$ is becomes distinguishable.
We show that sets of such incorrect guesses can be algorithmically  eliminated from \(\hat{X}_{0,k}\) based on the observations. The elimination process does not require knowledge of \(m\), provided that a sufficient number of positive measurements are observed over time. 

While instability aids estimation, it also makes it difficult to keep the state within the sensing region, which is necessary to obtain  positive measurements. A carefully designed control strategy is therefore required when the signal is lost. The second contribution of this work is the design of such a strategy, which \textit{provably} returns the system to the sensing region. To this end, we decompose the estimate of the sensing region into subregions from which the state may escape. Each subregion is paired with a control law that guarantees robust recovery for all escape points within it. This idea of creating the recovery control by decomposing state uncertainty utilizes geometric properties of  \textit{spherical Voronoi partitions}~\cite{na2002voronoi} and may be of independent interest.

The paper is organized as follows. Section~\ref{sec:prelim} introduces the required preliminaries, including notation and spherical Voronoi partitions, and Section~\ref{sec:problem_def}  formally states the problem. Section~\ref{sec:estimate} shows how persistent positive measurements can be used to refine uncertainty in the agent’s initial state and the landmark location. Section~\ref{sec:recovery} presents a Voronoi-based recovery strategy to handle signal loss and establishes conditions under which the system provably re-enters the sensing region. Section~\ref{sec:combine} unifies these results in an algorithm and shows that the agent visits the sensing ball infinitely often, and this leads to estimation of the initial state. Section~\ref{sec:experiments} shows experimental results. 

Overall, we establish lower bounds on the levels of instability and initial uncertainty $X_0$ under which the initial state and landmark can be localized from single-bit proximity measurements. Building on this analysis, we propose \hyperref[alg:ACT-LOC]{\textsc{ActiveLocalize}}, an algorithm that guarantees exponential contraction of the initial state estimate $\hat{X}_{0,k}$. Finally, we validate the proposed approach through randomized experiments that consistently support the theoretical results.



\section{Preliminaries} \label{sec:prelim}
\paragraph{Notation}
The ellipsoid $\mathcal{E}(\mu, P) = \{ x \in \mathbb{R}^d \mid (x - \mu)^{\top} P (x - \mu) \le 1 \}$ is centered at $\mu \in \mathbb{R}^d$ with symmetric and positive definite shape matrix $P \in \mathbb{R}^{d \times d}$. The diameter of a set $A \subset \mathbb{R}^n$ is defined by $\text{diam}(A) = \sup\{ \|x - y\|_2 \mid x, y \in A\}$, where $\|\cdot\|_2$ denotes the Euclidean norm. $\mathcal{B}(x, r) = \mathcal{E}(x, \frac{1}{r^2} I)$ denotes the closed Euclidean ball of radius $r$ centered at $x$. $\mathbb{S}^{n-1} := \{ x \in \mathbb{R}^n : \|x\| = 1 \}$ denotes the unit shell. The convex hull of a set $A$, denoted by $\text{conv}(A)$, is the set of all convex combinations of points in $A$, that is, $\text{conv}(A) = \{ \sum_{i=1}^{n} \lambda_i x_i \mid x_i \in A, \lambda_i \ge 0, \sum_{i=1}^{n} \lambda_i = 1 \}$. The Minkowski sum of two sets $X, Y \subset \mathbb{R}^n$ is the pointwise addition of their elements, that is, $X \oplus Y = \{ x + y \mid x \in X,\, y \in Y \}$ .
\subsection{Spherical Voronoi Partition}
Spherical Voronoi partition provides formal mechanism for quantifying angular separation on the unit sphere and will serve as a key geometric tool in the analysis throughout the paper. In this section, we introduce the concept of Spherical Voronoi Partition (SVP) that partitions the $n$-dimensional unit shell into regions based on a selected set of directions. We then show that, for a certain type of SVP, any two points falling in the same region are guaranteed to be closely aligned, and demonstrate a special geometric property of such SVPs that will be useful throughout the paper. The rest of the section presents a method for computing such SVPs.

\begin{definition} \label{def:SVP}
Let $p_i \in \mathbb{R}^n$ for $i = 1, \dots, N$, and consider a ball $\mathcal{B}(c,\rho)$ with $c \in \mathbb{R}^n$ and $\rho > 0$.  
The spherical Voronoi partition (SVP) of $\scalebox{1.2}{$\partial$}\mathcal{B}(c,\rho)$ is defined by the cells
\begin{equation}
\label{eq:svp_def}
R_i
= \bigl\{\,x \in \mathbb{R}^n : \|x - c\| = \rho,\ \langle x - c, p_i\rangle \ge \langle x - c, p_j\rangle \ \forall j \neq i \bigr\}.
\end{equation}
We define the interior of $R_i$ as $\text{int}(R_i) := \text{conv}(R_i \cup \{c\})$. Lastly, for any SVP $R_{1:N}$, the maximum alignment $\eta$ is defined as
\begin{equation}
\label{eq:alpha_eta}
\eta = \max_{i \neq j} \langle p_i, p_j\rangle.
\end{equation}
\end{definition}


\begin{remark}
For a given maximum alignment $\eta$, the maximum number of unit vectors $p_i$ that one can place on $\mathbb{S}^{n-1}$ varies monotonically with $\eta$. As $\eta$ increases, the angular‐separation constraint relaxes and $N$ grows; as $\eta$ decreases, the vectors must be more widely spaced and $N$ shrinks. A classical case is $\eta = \tfrac{1}{2}$, which corresponds to the kissing‐number problem, requiring the placement of the maximal number of non‐overlapping unit spheres tangent to a central one
\cite{boyvalenkov2015surveykissingnumbers}. In two dimensions, the optimum is achieved by six basis vectors whose convex hull form a regular hexagon. Other highly symmetric configurations include the twelve basis vectors in three dimensions forming a regular icosahedron, which has $\eta=1/\sqrt5<\tfrac12$ \cite{zboroczky2017stability}. However, in higher dimensions symmetric basis vectors are rare, and an optimization problem has to be solved to determine $N$.
\end{remark}

The above are well-established results. We now turn to a specific construction of SVPs which provides a lower bound on two vectors in an SVP. This bound will play an important role later in the analysis.

\begin{definition}\label{def:alphaSVP}
Let $\alpha \in [0,1)$ be a constant.  
An $\alpha$-SVP is the partition $R_{1:N}$ generated by basis vectors $p_{1:N} \subset \mathbb{S}^{n-1}$ such that the caps
\[
C(p_i,\alpha) = \{\, w \in \mathbb{S}^{n-1} : \langle p_i, w \rangle > \alpha \,\}
\]
cover the unit shell, i.e.\ $\mathbb{S}^{n-1} \subseteq \bigcup_{i=1}^N C(p_i,\alpha)$, with the smallest $N$.
\end{definition}

\begin{figure}[h]
  \centering
  \begin{minipage}{0.34\columnwidth}
    \centering
    \includegraphics[width=\linewidth]{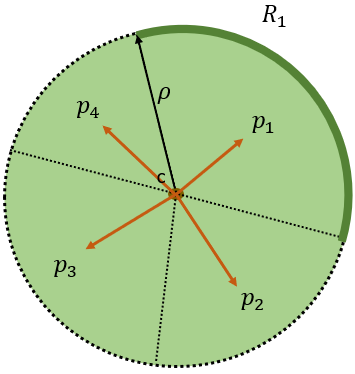}
  \end{minipage}\hspace{0.1\columnwidth}
  \begin{minipage}{0.43\columnwidth}
    \centering
    \includegraphics[width=\linewidth]{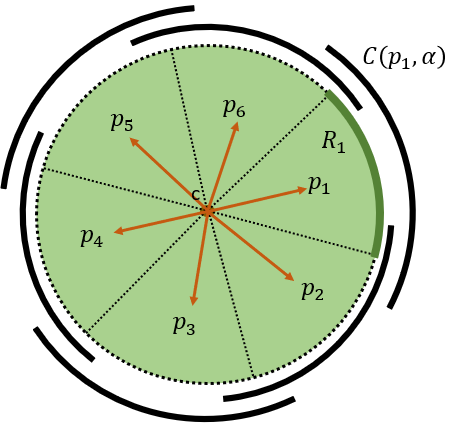}
  \end{minipage}
  \caption{(a) An SVP spanned by four basis vectors (orange arrows), with the first partition corresponding to $p_1$ highlighted along the boundary in dark green; 
  (b) an $\alpha$-SVP of $\scalebox{1.2}{$\partial$}\mathcal{B}(c,\rho)$, where the caps appear as black arcs and one cap, $C(p_1,\alpha)$, with its partition $R_1$, is highlighted.}
  \label{fig:svp_comparison}
\end{figure}

\begin{proposition}\label{prop:SVP_exist}
For any $\alpha \in [0,1)$, an $\alpha$-SVP always exists.
\end{proposition}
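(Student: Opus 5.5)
We need to show that for every $\alpha\in[0,1)$ some finite family $p_{1:N}\subset\mathbb{S}^{n-1}$ exists whose open caps $C(p_i,\alpha)$ cover $\mathbb{S}^{n-1}$. Once one finite cover exists, the set of admissible cardinalities is a nonempty subset of $\mathbb{N}$. It therefore has a least element $N^\ast$. Any covering family of size $N^\ast$ then generates, via Definition~\ref{def:SVP}, the required $\alpha$-SVP. So the plan is to prove existence of a finite cover by compactness, then take the minimum by well-ordering.

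\textbf{Step 1 (open cover).} Each cap $C(p,\alpha)=\{w\in\mathbb{S}^{n-1}:\langle p,w\rangle>\alpha\}$ is relatively open in $\mathbb{S}^{n-1}$, since it is the preimage of $(\alpha,\infty)$ under the continuous map $w\mapsto\langle p,w\rangle$. Consider the family $\{C(p,\alpha)\}_{p\in\mathbb{S}^{n-1}}$. It covers the sphere, because every $w\in\mathbb{S}^{n-1}$ satisfies $\langle w,w\rangle=1>\alpha$ and hence $w\in C(w,\alpha)$. This is exactly where the hypothesis $\alpha<1$ is used. For $\alpha=1$ the caps would be empty.

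\textbf{Step 2 (finite subcover).} The sphere $\mathbb{S}^{n-1}$ is closed and bounded in $\mathbb{R}^n$, so it is compact by Heine--Borel. The open cover from Step~1 therefore has a finite subcover $C(p_1,\alpha),\dots,C(p_M,\alpha)$.

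One can also make this quantitative. By Cauchy--Schwarz, $\langle p,w\rangle>\alpha$ holds whenever $\|p-w\|<1-\alpha$, since $\langle p,w\rangle = 1-\tfrac12\|p-w\|^2$ and $\tfrac12(1-\alpha)^2\le 1-\alpha$. Consequently any $\delta$-net of the sphere with $\delta<1-\alpha$ gives a cover, and $M$ is bounded by a volumetric estimate of order $(3/\delta)^n$. This bound is optional, but it shows $N$ is explicit.

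\textbf{Step 3 (minimality and well-definedness).} Let $N^\ast=\min\{N\in\mathbb{N}:\exists\,p_{1:N}\subset\mathbb{S}^{n-1}\text{ with }\mathbb{S}^{n-1}\subseteq\bigcup_i C(p_i,\alpha)\}$. This minimum exists because the set is nonempty by Step~2. Choose any minimizing family $p_{1:N^\ast}$. The cells $R_i$ of Definition~\ref{def:SVP} are then defined for any center $c$ and radius $\rho$, so the pair is an $\alpha$-SVP.

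A subtle point is that Definition~\ref{def:SVP} tacitly treats the $p_i$ as distinct. A minimal family automatically has distinct vectors, since a duplicate could be deleted and the smaller family would still cover, contradicting minimality. There is also the edge case $n=1$, where $\mathbb{S}^0=\{\pm1\}$ and $N^\ast=2$ with $p=\pm1$; this fits the same argument. I do not expect a genuine obstacle. The only care points are the use of $\alpha<1$ in Step~1 and confirming that the minimum is attained, not merely an infimum, which holds because the cardinalities are natural numbers.
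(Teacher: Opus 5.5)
Your proof is correct and follows the same compactness argument as the paper: the caps $C(x,\alpha)$ cover $\mathbb{S}^{n-1}$ because $\alpha<1$, compactness gives a finite subcover, and one then takes a covering family of minimal cardinality. Your Step~3 is slightly more careful than the paper's proof, since you take the minimum over all finite covers rather than only over subcovers of one fixed cover, which is what the ``smallest $N$'' in Definition~\ref{def:alphaSVP} actually requires.
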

\begin{proof}
See Appendix~\ref{appendix:SVP_existproof}.
\end{proof}
\begin{proposition}\label{prop:SVP_bound}
Let $\alpha \in [0,1)$ and let $R_{1:N}$ be $\alpha$-SVP of $\scalebox{1.2}{$\partial$}\mathcal{B}(c,1)$ for any $c$ with basis vectors $p_{1:N}$.  
For any $i$ and $w_1, w_2 \in R_i$, we have
\[
\langle w_1, w_2 \rangle \;\ge\; 2\alpha^2 - 1.
\]
\end{proposition}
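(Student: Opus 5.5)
The plan is to reduce the claim to an angular triangle inequality on $\mathbb{S}^{n-1}$, after first showing that every point of a cell lies in the cap of that cell's own basis vector. Throughout, I read $w_1, w_2$ as the unit directions $w_1 - c$ and $w_2 - c$. This is the only reading under which the inner product in the statement is translation invariant, and it matches Definition~\ref{def:SVP}, where the cell condition is written in terms of $x - c$. So without loss of generality I take $c = 0$, and then $R_i \subset \mathbb{S}^{n-1}$.

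\textbf{Step 1: each cell lies in the cap of its own generator.} Take $w \in R_i$. By Definition~\ref{def:alphaSVP} the caps cover the unit shell, so there is some $j$ with $\langle p_j, w\rangle > \alpha$. By the defining inequality \eqref{eq:svp_def} of $R_i$, $\langle w, p_i\rangle \ge \langle w, p_j\rangle$. Combining the two gives $\langle w, p_i\rangle > \alpha$, so $R_i \subseteq C(p_i,\alpha)$. This step uses both the covering property and the Voronoi maximality, and I expect it to be the crux; the rest is elementary spherical geometry.

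\textbf{Step 2: convert to angles.} Set $\theta := \arccos\alpha$. Since $\alpha \in [0,1)$, we have $\theta \in (0, \pi/2]$. For $w_1, w_2 \in R_i$, Step 1 says the angles $\theta_k := \angle(w_k, p_i)$ satisfy $\theta_k < \theta$ for $k = 1, 2$.

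\textbf{Step 3: bound the inner product.} Decompose $w_k = a_k p_i + u_k$ with $u_k \perp p_i$, where $a_k = \cos\theta_k$ and $\|u_k\| = \sin\theta_k$. Then, by Cauchy--Schwarz on the orthogonal parts,
\[
\langle w_1, w_2\rangle = a_1 a_2 + \langle u_1, u_2\rangle \ge \cos\theta_1\cos\theta_2 - \sin\theta_1\sin\theta_2 = \cos(\theta_1+\theta_2).
\]
This is just the triangle inequality for geodesic distance on the sphere, written in coordinates.

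\textbf{Step 4: conclude.} We have $\theta_1 + \theta_2 < 2\theta \le \pi$, and cosine is decreasing on $[0,\pi]$. Hence
\[
\cos(\theta_1+\theta_2) > \cos 2\theta = 2\cos^2\theta - 1 = 2\alpha^2 - 1.
\]
This gives the claimed bound, in fact with strict inequality.

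The only place the hypothesis $\alpha \ge 0$ is needed is to keep $2\theta \le \pi$, so that the monotonicity of cosine applies in Step 4.
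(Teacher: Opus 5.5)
Your proof is correct and follows the paper's route. Both arguments first place $R_i$ inside the cap $C(p_i,\alpha)$ using the covering property together with the Voronoi inequality, then decompose each $w_k$ along $p_i$ and apply Cauchy--Schwarz to the orthogonal parts. Your rewriting of $a_1a_2-\|z_1\|\|z_2\|$ as $\cos(\theta_1+\theta_2)$ with $\theta_1+\theta_2<2\theta\le\pi$ makes rigorous the minimization step that the paper dismisses as ``a quick check,'' and your direct Step~1 gives the strict inclusion $R_i\subseteq C(p_i,\alpha)$ cleanly.
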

\begin{proof}
See Appendix~\ref{appendix:SVP_boundproof}.
\end{proof}
Although the purpose of spherical Voronoi partitioning is not completely clear at this point, we will show in Section \ref{sec:recovery} that the recovery control that brings the agent back to $\mathcal{B}(m,r)$ will be a function of the basis vectors of a certain $\alpha$-SVP. Now, having shown that any two points in the same $\alpha$-SVP are well aligned, 
we now use this property to derive a containment result for $\alpha$-SVPs within the sensing ball. 

\begin{proposition} \label{prop:SVP_ball}
Let \(x\in \mathcal{B}(m,r)\) and let \(R_{1:N}\) be a \(\tfrac{\sqrt{3}}{2}\)-SVP of $\scalebox{1.2}{$\partial$}\mathcal{B}(x,r)$. Then there exists \(i\in\{1,\dots,N\}\) such that
\[
int(R_i)\subseteq \mathcal{B}(m,r).
\]
\end{proposition}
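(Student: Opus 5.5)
The plan is to locate the direction from $x$ toward the landmark, choose the Voronoi cell whose basis vector is well aligned with that direction, and show that the whole cell boundary lies in $\mathcal{B}(m,r)$. Convexity of the ball then takes care of the interior. Write $\alpha=\tfrac{\sqrt3}{2}$ and $s=\|m-x\|\le r$. If $s=0$, every point of $\partial\mathcal{B}(x,r)$ is at distance $r$ from $m$, and any $i$ works by convexity. So assume $s>0$ and set $d=(m-x)/s\in\mathbb{S}^{n-1}$.

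\textbf{Step 1 (choice of cell).} Since the caps $C(p_j,\alpha)$ cover $\mathbb{S}^{n-1}$ (Definition~\ref{def:alphaSVP}), there is an index $i$ with $\langle p_i,d\rangle>\alpha$. In other words, the angle between $p_i$ and $d$ is less than $\pi/6$.

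\textbf{Step 2 (cell points are close to $p_i$).} Any point of $R_i$ has the form $x+rw$ with $w\in\mathbb{S}^{n-1}$. By the covering property, $w\in C(p_j,\alpha)$ for some $j$, and the Voronoi condition in~\eqref{eq:svp_def} gives $\langle w,p_i\rangle\ge\langle w,p_j\rangle>\alpha$. This is the same observation underlying Proposition~\ref{prop:SVP_bound}, and it yields $\angle(w,p_i)<\pi/6$.

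\textbf{Step 3 (combine the angles).} The geodesic distance $\arccos\langle\cdot,\cdot\rangle$ on the sphere satisfies the triangle inequality. Hence $\angle(w,d)<\pi/3$, that is, $\langle w,d\rangle>\tfrac12$.

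\textbf{Step 4 (distance estimate).} Expanding the squared distance gives
\[
\|x+rw-m\|^2=r^2+s^2-2rs\langle w,d\rangle .
\]
This is at most $r^2$ exactly when $s\le 2r\langle w,d\rangle$. The latter holds because $2r\langle w,d\rangle>r\ge s$. Therefore $R_i\subseteq\mathcal{B}(m,r)$.

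\textbf{Step 5 (interior).} We also have $x\in\mathcal{B}(m,r)$, and the ball is convex. It follows that $\mathrm{int}(R_i)=\mathrm{conv}(R_i\cup\{x\})\subseteq\mathcal{B}(m,r)$.

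The only delicate point is Step~3. One must use the spherical triangle inequality for angles rather than a naive inner-product bound, because Proposition~\ref{prop:SVP_bound} alone only gives $\langle w_1,w_2\rangle\ge 2\alpha^2-1=\tfrac12$ within a single cell and does not relate $w$ directly to $d$. If one prefers to stay within inner products, the same conclusion follows from the elementary bound $\langle a,c\rangle\ge \langle a,b\rangle\langle b,c\rangle-\sqrt{(1-\langle a,b\rangle^2)(1-\langle b,c\rangle^2)}$ for unit vectors $a,b,c$. With $a=w$, $b=p_i$, $c=d$, it gives $\langle w,d\rangle>\tfrac34-\tfrac14=\tfrac12$.
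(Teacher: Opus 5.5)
Your proof is correct and is essentially the paper's argument: pick a cell aligned with the landmark direction, show $\langle w,d\rangle\ge\tfrac12$ for every direction $w$ in that cell, expand $\|x+rw-m\|^2$ (the paper writes this as $\|z-v\|^2\le 1$ with $v=(m-x)/r$), and finish by convexity. The only difference is how $i$ is chosen: the paper takes $i$ with $d$ in the Voronoi cell itself, so Proposition~\ref{prop:SVP_bound} applies directly to the pair $(w,d)$ and gives $\langle w,d\rangle\ge 2\alpha^2-1=\tfrac12$; your closing remark that this proposition cannot relate $w$ to $d$ therefore holds only for your cap-based choice of $i$, where your detour through $p_i$ (which is just the proof of Proposition~\ref{prop:SVP_bound} with one vector in the cap instead of the cell) is what is needed.
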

\begin{proof}
See Appendix~\ref{appendix:SVP_ballproof}.
\end{proof}
\paragraph{Constructing $\alpha$-SVPs}

In practice, we need to find basis vectors 
$p_{1:N} \subset \mathbb{S}^{n-1}$ 
such that the corresponding regions $R_{1:N}$ form an $\alpha$-SVP. 
Each basis vector lies on the unit sphere in $\mathbb{R}^n$, and our goal is to arrange them 
to be as mutually separated as possible. This can be formulated as the following 
optimization problem:
\[
\label{eq:svp-direct}
\begin{aligned}
\min_{\{p_i\}} \quad & \gamma, \\
\text{s.t.}\quad & \langle p_i, p_j \rangle \le \gamma, \quad \forall i \ne j, \\
& \|p_i\|_2 = 1, \quad \forall i = 1,\dots,N.
\end{aligned}
\]
Here, the scalar $\gamma$ bounds the maximum pairwise inner product 
between distinct vectors, and minimizing $\gamma$ pushes the basis vectors to be as far apart as possible on the unit sphere. We denote this optimization process by \textsc{VecOpt}. Efficient procedures for computing \textsc{VecOpt} have already been developed in the literature \cite{Homsup_2018}, and a detailed treatment is beyond the scope of this paper. For an $\alpha$-SVP, we need the minimum $N$. Therefore, we start with a small $N$ and iteratively solve this optimization while incrementing $N$ until the caps $\{C(p_i,\alpha)\}_{i=1}^{i=N}$ cover the unit sphere, where $\alpha$ is fixed. This procedure is given in \hyperref[alg:FindSVP]{Algorithm 1}.

\begin{algorithm}[h]
\caption{FindSVP} \label{alg:FindSVP}
\KwIn{$\alpha \in [0,1)$, $N$, $n$}
\KwOut{$p_{1:N} \subset \mathbb{S}^{n-1}$}

\Repeat{all $m_i \ge \alpha$}{
   $p_{1:N} \gets \textsc{VecOpt}(N, n)$\;
  $R_{1:N} \gets \textsc{Voronoi}(p_{1:N})$\;

  \ForEach{$R_{1:N}$}{
    $m_i \gets \min_{w\in R_i}\langle p_i,w\rangle$\;
    \If{$m_i < \alpha$}{
      $N \gets N+1$
      \textbf{break} 
    }
  }
}
\Return $p_{1:N}$\;
\end{algorithm}

\begin{remark}
Finding the minimum inner product in Line 5 of the  \hyperref[alg:FindSVP]{FindSVP} algorithm is not expensive since each \(R_i\) forms a convex region on the sphere and \(\langle p_i, w\rangle\) is linear in \(w\). The minimum is therefore attained at the vertices of \(R_i\), which correspond to intersections of bisector boundaries and can be found analytically in low dimensions. If \(\langle p_i, w\rangle \ge \alpha\) for all \(i\), the bound \(\langle w_1, w_2\rangle \ge 2\alpha^2 - 1\) follows for all \(w_1, w_2 \in R_i\) as in Proposition~\ref{prop:SVP_bound}.
\end{remark}


\section{Problem Formulation} \label{sec:problem_def}
We consider an agent moving in Euclidean space attempting to localize itself with the aid of a landmark. The real position of the agent at time $k$, is $x_k \in \mathbb{R}^n$ and the position of the stationary landmark is $m \in \mathbb{R}^n$. The agent does not know its real position $x_k$ nor the position of the landmark $m$. Instead, it knows about two sets $X_0, M \subset \mathbb{R}^n$ such the  landmark $m \in M$ and its initial position $x_0 \in X_0$. Both \(X_0\) and \(M\) are convex polyhedra. The dynamics of the agent are described by the linear system
\begin{equation}
x_{k+1} = A\,x_k + Bu_k,
\label{eq:system_dynamics}
\end{equation}
where \(x_k \in \mathbb{R}^n\) denotes the system state at time step \(k\), \(u_k\) is the control input, \((A, B)\) is a controllable pair, and the system matrix \(A\) is unstable, i.e., \(|\lambda_i(A)| > 1\) for all \(i\). At each time step \(k\), the agent receives a single-bit measurement
\begin{equation}
y_k=
\begin{cases}
1, & \text{if } \| x_k - m \| \leq r, \\
0, & \text{otherwise},
\end{cases}
\label{c}
\end{equation}
with \(r > 0\) being the detection radius associated with the landmark. Assume that $x_0 \in \mathcal{B}(m, r)$ so that $y_0 = 1$. For estimation purposes, we denote the state and landmark estimates internally computed by the agent as \(\hat{X}_k[t]\) and \(\hat{M}[t]\), respectively. Here, \(\hat{X}_k[t]\) represents the estimate\footnote{The notation $\hat{X}_{0,k}$ used in the introduction for the informal problem statement is replaced by $\hat{X}_0[k]$ hereafter, to improve clarity.} of \(x_k\) computed at time \(t\); since the landmark is stationary, we drop the subscripts for the estimates of $m$, i.e.,  \(\hat{M}[t]\) represents the estimate of \(m\) at time \(t\). Naturally, at the beginning we have $\hat{X}_0[0] = X_0$ and $\hat{M}[0] =M$. The objective is to design an algorithm that, given \(X_0\), \(M\), \(r\), \(u\) and the dynamics \eqref{eq:system_dynamics}, jointly synthesizes a control policy and an estimator such that
\[
\lim_{k\to\infty}\text{diam}\bigl(\hat X_0[k]\bigr)=0.
\]
Since the result involves several technical steps and intermediate constructions, 
we first present an informal statement to highlight the main idea.

\begin{informalthm}
If the landmark uncertainty satisfies a suitable condition linked to the system’s instability
(see~\eqref{eq:general-cond}), then 
\hyperref[alg:ACT-LOC]{\textsc{ActiveLocalize}} guarantees exponential contraction of 
the initial state estimate:
\[
\mathrm{diam}(\hat X_0[k]) \le C\, a^{k},
\qquad C>0,\; 0<a<1.
\]
\end{informalthm}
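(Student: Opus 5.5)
The plan has two parts. First, positive measurements shrink the initial-state estimate at the rate set by the instability. Second, the Voronoi-based recovery control guarantees such measurements recur with bounded gaps. Combining the two gives the exponential bound.

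\textbf{Estimation step.} Whenever $y_k=1$ we know $\|x_k-m\|\le r$ with $x_k=A^k x_0+\sum_{j<k}A^{k-1-j}Bu_j$. Fix a reference time $t_0$ with $y_{t_0}=1$. For any two candidates $x_0,x_0'$ consistent with positive measurements at both $t_0$ and $k$, the triangle inequality through the common landmark gives
\[
\|A^k(x_0-x_0')\|\le 2r+\mathrm{diam}(\hat M[k]).
\]
Hence $\|x_0-x_0'\|\le \|A^{-k}\|\,(2r+\mathrm{diam}(\hat M[k]))$. Because every eigenvalue of $A$ satisfies $|\lambda_i(A)|>1$, we have $\|A^{-k}\|\le c\,\bar\lambda^{-k}$ for any $\bar\lambda$ with $1<\bar\lambda<\min_i|\lambda_i|$. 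So each positive measurement at time $k$ confines $\hat X_0[k]$ to a set of diameter $O(\bar\lambda^{-k})$. The estimator intersects $\hat X_0$ with $A^{-k}\bigl(\hat M\oplus\mathcal B(0,r)-\sum_j A^{k-1-j}Bu_j\bigr)$, and this bound holds for that intersection. The same intersection, mapped forward, also keeps $\mathrm{diam}(\hat M)$ bounded and nonincreasing. Between positive measurements, $\hat X_0$ is monotone nonincreasing, so its diameter never grows.

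\textbf{Recurrence step.} I would show that under condition \eqref{eq:general-cond}, the recovery law of Section~\ref{sec:recovery} forces $y_k=1$ at least once in every window of some fixed length $T$. At an escape time, the agent knows a point of its estimate lying in the estimated sensing ball. It builds a $\tfrac{\sqrt3}{2}$-SVP around that point. Proposition~\ref{prop:SVP_ball} gives a cell $R_i$ with $\mathrm{int}(R_i)\subseteq\mathcal B(m,r)$. Proposition~\ref{prop:SVP_bound} says all escape directions in a cell are aligned within inner product $2\alpha^2-1$, so a single control (steering toward $p_i$, using controllability of $(A,B)$ over $n$ steps) robustly returns every escape point in that cell. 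The recovery lemma is what makes this work: it requires the spread of the current-state estimate, roughly $\|A^k\|\,\mathrm{diam}(\hat X_0[k])$ plus $\mathrm{diam}(\hat M)$, to stay below a threshold tied to $r$. This is where \eqref{eq:general-cond} enters.

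\textbf{Combining and main obstacle.} I would argue by induction over the return times $k_1<k_2<\dots$, with $k_{j+1}-k_j\le T$. The induction hypothesis is that the current-state uncertainty $\|A^{k_j}\|\,\mathrm{diam}(\hat X_0[k_j])$ remains bounded by a constant. This holds because the estimation step gives $\mathrm{diam}(\hat X_0[k_j])\lesssim \bar\lambda^{-k_j}$, which compensates the growth of $A^{k_j}$ up to the factor $\kappa(A)$. That constant must be small enough for the recovery lemma, and this closes the induction. Then for any $k$, take the last return time $k_j\le k$, so $k_j\ge k-T$, and conclude
\[
\mathrm{diam}(\hat X_0[k])\le \mathrm{diam}(\hat X_0[k_j])\le C'\bar\lambda^{-(k-T)}=C a^k
\]
with $a=\bar\lambda^{-1}$.

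The main obstacle is the circularity in this induction: recovery needs small current-state uncertainty, and small uncertainty needs recovery. Two things must both hold for it to close. The non-normal gap between $\|A^k\|$ and $\|A^{-k}\|^{-1}$ must be absorbed by a constant (the condition number of $A$'s Jordan basis) inside \eqref{eq:general-cond}. And the landmark uncertainty $\mathrm{diam}(M)$ must be small relative to $r$ from the start. I would handle the base case directly from $y_0=1$ and the prior sets.
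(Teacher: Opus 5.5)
Your closing step does not work, and the circularity you single out as the main obstacle is not actually present. You feed the recovery result by tracking the current-state spread through $\|A^{k_j}\|\,\mathrm{diam}(\hat X_0[k_j])$. You argue it stays bounded because $\mathrm{diam}(\hat X_0[k])$ is of order $\|A^{-k}\|$. But $\|A^k\|\,\|A^{-k}\|\ge\bigl(\max_i|\lambda_i|/\min_i|\lambda_i|\bigr)^k$, so no condition-number constant absorbs this product unless all eigenvalues share a modulus; for $A=\mathrm{diag}(2,3)$ it equals $(3/2)^k$. With your choice $\bar\lambda<\lambda_{\min}$, even $\|A^k\|\,\bar\lambda^{-k}\ge(\lambda_{\min}/\bar\lambda)^k$ diverges, already for scalar systems. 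So the induction does not close, and in any case no $\kappa(A)$ factor appears in \eqref{eq:general-cond}.

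The missing observation is that Line~8 of \textsc{Estimate} intersects the reachable set with $\hat M\oplus\mathcal B(0,r)$, where $\hat M\subseteq M$ is the current landmark estimate. Hence at every time with $y_k=1$ one has $\mathrm{diam}(\hat X_k[k])\le\mathrm{diam}(M)+2r$, uniformly in $k$ and independently of $\hat X_0[k]$. Condition \eqref{eq:general-cond} says exactly that $\mathrm{diam}(M)+2r$ is below the threshold \eqref{general_cond} of Corollary~\ref{cor:recoverability_region_control}. Also $x_k\in\hat X_k[k]\cap\mathcal B(m,r)$ by Corollary~\ref{cor:cur_state_est}. So recovery holds at every escape time with no induction on the initial-state uncertainty (Theorem~\ref{thm:main_thm}). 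You had this ingredient: your pulled-back landmark constraint, mapped forward, is precisely this bound.

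Second, your estimation step describes an update that \textsc{Estimate} does not perform on $\hat X_0$. Line~5 intersects $\hat X_0$ only with the landmark-free ellipsoids $\mathcal E(\mu_{kj},P_{kj})$, which come from $\|x_k-x_j\|\le 2r$ for pairs of positive times. The landmark intersection is applied to the current-state estimate and is never pulled back. Since $X_0$ need not lie near $M$, $\hat X_0[k]$ need not be contained in $A^{-k}\bigl(\hat M\oplus\mathcal B(0,r)-\sum_j A^{k-1-j}Bu_j\bigr)$, so your bound does not apply to the algorithm's estimate.

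The repair is your reference-time idea with $t_0=0$, which is legitimate since $y_0=1$. With $d_k=\max L_k$, $\hat X_0[k]\subseteq\mathcal E(\mu_{d_k0},P_{d_k0})$, whose diameter is $4r\|(A^{d_k}-I)^{-1}\|\le 4rK\lambda_{\min}^{-c d_k}$ by Proposition~\ref{prop:matrixnorm}. Combined with $d_k\ge k-N\bar n$ from the bounded gaps, this gives $Ca^k$ with $a=\lambda_{\min}^{-c}$, which is how the paper proves Theorem~\ref{thm:exp_decay}.

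Finally, because $m$ is unknown, the agent cannot select the good Voronoi cell. The RCS sweeps all $N$ directions $p_i$, and this sweep is what sets the gap length. In the analysis the partition is centered at the true escape state, since that is the point known to lie in $\mathcal B(m,r)$, not at a point of the estimate.
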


We denote by \(\textsc{Reach}(\cdot)\) the operator that computes the set of all states reachable from the initial set \(X_0\) under the dynamics~\eqref{eq:system_dynamics} and input sequence \(u_{0:k-1}\). The operator is formally defined as  
\[
\textsc{Reach}(A,B,u_{0:k-1},X_0)
= \left\{\, A^k x_0 + \sum_{i=0}^{k-1} A^{k-1-i} B u_i \;\middle|\; x_0 \in X_0 \,\right\}.
\]
Although the exact computation of the reachable set may be computationally expensive, tight over-approximations can be obtained by propogating finite number of vertices whose convex hull tightly over-approximates \(X_0\).
\section{State Estimation under Persistent Measurements} \label{sec:estimate}

First, we show that if we can choose a control sequence that somehow maintains $y_k=1$ for a stretch of steps, then the uncertainty estimate $\hat{X}_k[k]$ can be kept within some bound using our estimation method. Unfortunately, it may not be possible to keep $y_k = 1$ because the system is unstable and the exact state $x_k$ is unknown. To address this, we introduce a second control mechanism that steers the state back into the detection ball $\mathcal{B}(m,r)$. The signal can then be recovered if $\hat{X}_k[k]$ is sufficiently small. Together, we combine these arguments to guarantee recovery of $\mathcal{B}(m,r)$ after any signal loss, and show this leads to an exponential decay of the initial-state uncertainty.

We now give an overview of \hyperref[alg:EST]{\textsc{Estimate}}. Fix $x_0$, a sequence of controls $u_{0:k-1}$, and let $L_k$ be the subset of time indices $\{0,\dots,k\}$ that have positive measurements, including $k$. The algorithm takes two sets $X_0, M$, the filtered index set $L_k$, and the control sequence $u_{0:k-1}$, and produces the updated sets $X_0'$, $X_k'$, and $M'$. These sets will later correspond to $\hat{X}_0[k]$, $\hat{X}_k[k]$, and $\hat{M}[k]$, respectively. Algorithm initializes a time-iteration variable $w=0$ to track $L_k$, and sets $\bar X_0[w]=X_0$. In \hyperref[EST:line3]{Line 3} and \hyperref[EST:line4]{4}, for each $j \in L_k \setminus \{k\}$, an ellipsoid center $\mu_{kj}$ and shape matrix $P_{kj}$ are computed. In \hyperref[EST:line5]{Line 5}, $X_0$ is tightened by intersecting it with $\mathcal{E}(\mu_{kj}, P_{kj})$. In \hyperref[EST:line8]{Line 8}, the tightened set $X_0'$ is propagated through the linear dynamics using $\textsc{Reach}(\cdot)$ operator and intersected with the bloated set $M$ to obtain $X_k'$. In \hyperref[EST:line9]{Line 9}, $M'$ is obtained by intersecting $M$ with the reachable sets at times $j \in L_k$, each similarly bloated. Finally, it returns the updated sets. Also, \hyperref[alg:subroutines]{Algorithm 2} lists two computational subroutines used by \hyperref[alg:EST]{\textsc{Estimate}} to compute the ellipsoids required for shrinking $X_0$. 
The specific choice of these subroutines will become clear in Lemma~\ref{lem:init_state_est}.

\begin{algorithm}
\caption{Subroutines for \hyperref[alg:EST]{\textsc{Estimate}}}
\label{alg:subroutines}

\SetKwProg{Fn}{Function}{}{}
\Fn{\textsc{ComputeMu}($k,j,u,A,B$)}{
  $\mu_{kj} \gets -(A^k - A^j)^{-1}\!\left(
    \sum_{m=0}^{k-1} A^{k-1-m}Bu_m
    - \sum_{m=0}^{j-1} A^{j-1-m}Bu_m
  \right)$\;
  \Return{$\mu_{kj}$}\;
}

\Fn{\textsc{ComputeEllipsoidMatrix}($k,j,A,r$)}{
  $P_{kj} \gets \dfrac{(A^k - A^j)^\top (A^k - A^j)}{4r^2}$\;
  \Return{$P_{kj}$}\;
}

\end{algorithm}

\begin{algorithm}
\caption{\hyperref[alg:EST]{\textsc{Estimate}}(\(X_0,\, M,\, L_k,\, u_{0:k-1}\))}
\label{alg:EST}

$w \gets 0$; 
$\bar{X}_0[w] \gets X_0$;  \label{EST:line1}

\For{$j \in L_k \setminus \{k\}$}{ \label{EST:line2}
  $\mu_{kj} \gets \textsc{ComputeMu}(k,j,u,A,B)$\; \label{EST:line3}
  $P_{kj} \gets \textsc{ComputeEllipsoidMatrix}(k,j,A,r)$\; \label{EST:line4}
  $\bar{X}_0[w+1] \gets \bar{X}_0[w] \cap \mathcal{E}(\mu_{kj},P_{kj})$\; \label{EST:line5}
  $w \gets w+1$\; \label{EST:line6}
}

$X_0' \gets \bar{X}_0[w]$\; \label{EST:line7}
$X_k' \gets \textsc{Reach}(A,B,u_{0:k-1},X_0') \cap (M \oplus \mathcal{B}(0,r))$\; \label{EST:line8}
$M' \gets \bigcap_{j \in L_k} \left( \textsc{Reach}\!\left(A, B, u_{0:j-1}, X_0'\right) \oplus \mathcal{B}(0, r) \right) \cap M$\; \label{EST:line9}

\Return{$(X_0',X_k',M')$}\; \label{EST:line10}
\end{algorithm}

Before we start with the formal statements, we outline their intent. In our setting, the estimates \((\hat{X}_0[k],\, \hat{X}_k[k],\, \hat{M}[k])\) are updated jointly from their previous values by \hyperref[alg:EST]{\textsc{Estimate}}. Lemma~\ref{lem:init_state_est} shows that the update of the initial-state set \(\hat{X}_0[k]\) in \hyperref[alg:EST]{\textsc{Estimate}} is an over-approximation of \(x_0\). Corollary~\ref{cor:landmark_est} builds on Lemma~\ref{lem:init_state_est} and shows that the landmark estimate \(\hat{M}[k]\) is an over-approximation of \(m\), and Corollary~\ref{cor:cur_state_est} proves that the current-state estimate \(\hat{X}_k[k]\) is an over-approximation of \(x_k\).

\begin{lemma} \label{lem:init_state_est}
Consider any $k^* > 0$ and a control sequence $u_{0:{k^*-1}}$. 
For all $k \le k^*$, inductively define
\[
(\hat{X}_0[k], \hat{X}_k[k], \hat{M}[k]) 
= \hyperref[alg:EST]{\textsc{Estimate}}\big(\hat{X}_0[k-1],\, \hat{M}[k-1],\, L_k,\, u_{0:{k-1}}\big)
\]
whenever $y_k = 1$, and 
$(\hat{X}_0[k],\, \hat{M}[k]) 
= (\hat{X}_0[k-1],\, \hat{M}[k-1])$ otherwise,
with initial conditions $\hat{X}_0[0] = X_0$ and $\hat{M}[0] = M$. 
Then
\[
x_0 \in \hat{X}_0[k^*].
\]
\end{lemma}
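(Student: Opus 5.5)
Since $\hat X_0[k]$ is built by repeated intersection, the plan is to show that every set intersected into it contains $x_0$, then argue by induction on $k$. The invariant is $x_0 \in \hat{X}_0[k]$ for every $k \le k^*$. The base case is immediate, because $\hat{X}_0[0]=X_0 \ni x_0$ by assumption. In the inductive step, if $y_k=0$ the estimate is unchanged, so the invariant carries over. If $y_k=1$, then $\hat{X}_0[k]$ is the output $X_0'$ of \textsc{Estimate}. By Lines~\ref{EST:line1}--\ref{EST:line7}, $X_0'$ equals $\hat{X}_0[k-1]\cap\bigcap_{j\in L_k\setminus\{k\}}\mathcal{E}(\mu_{kj},P_{kj})$. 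Given the inductive hypothesis, it therefore suffices to show $x_0\in\mathcal{E}(\mu_{kj},P_{kj})$ for every $j\in L_k\setminus\{k\}$.

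The key step is this ellipsoid containment. Write $s_k=\sum_{i=0}^{k-1}A^{k-1-i}Bu_i$, so that $x_k = A^k x_0 + s_k$ and $x_j = A^j x_0 + s_j$. Since $j,k\in L_k$, both $y_j=1$ and $y_k=1$, which gives $\|x_k-m\|\le r$ and $\|x_j-m\|\le r$. The triangle inequality then yields $\|x_k-x_j\|\le 2r$, that is,
\[
\|(A^k-A^j)x_0 + (s_k - s_j)\| \le 2r .
\]
The unknown landmark $m$ drops out here, which is why no knowledge of $m$ is needed.

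Next I would rewrite the left-hand side in terms of the ellipsoid parameters. By \textsc{ComputeMu}, $(A^k-A^j)\mu_{kj} = -(s_k-s_j)$, so the inequality becomes $\|(A^k-A^j)(x_0-\mu_{kj})\|\le 2r$. Squaring and dividing by $4r^2$ gives $(x_0-\mu_{kj})^\top P_{kj}(x_0-\mu_{kj})\le 1$, with $P_{kj}$ as in \textsc{ComputeEllipsoidMatrix}. This says exactly that $x_0\in\mathcal{E}(\mu_{kj},P_{kj})$.

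The main obstacle is justifying that $A^k-A^j$ is invertible, which is needed both for $\mu_{kj}$ to be well defined and for $P_{kj}$ to be positive definite. I would factor $A^k-A^j = A^j(A^{k-j}-I)$ with $k>j$. The factor $A^j$ is invertible because all eigenvalues of $A$ are nonzero. The eigenvalues of $A^{k-j}$ are $\lambda_i^{k-j}$, and these have modulus $>1$ because $|\lambda_i(A)|>1$. Hence none equals $1$, so $A^{k-j}-I$ is invertible. This is the point where instability enters. With every ellipsoid shown to contain $x_0$, the induction closes and gives $x_0\in\hat{X}_0[k^*]$.
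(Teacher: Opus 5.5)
Your proposal is correct and follows the paper's proof essentially step for step. Both use induction on $k$, reduce the case $y_k=1$ to showing $x_0\in\mathcal{E}(\mu_{kj},P_{kj})$, and derive that containment from $\|x_k-x_j\|\le 2r$ by unrolling the dynamics and rewriting in terms of $\mu_{kj}$ and $P_{kj}$. Your factorization $A^k-A^j=A^j(A^{k-j}-I)$ is a clean way to justify invertibility, which the paper handles only in a separate remark via the Jordan form.
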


\begin{proof}
We proceed by induction on $k \le k^*$. 
The claim holds at $k = 0$, since $x_0 \in X_0 = \hat{X}_0[0]$. 
Assume $x_0 \in \hat{X}_0[k-1]$ for some $k > 0$. 
If $y_k = 0$, then by definition $\hat{X}_0[k] = \hat{X}_0[k-1]$, so $x_0 \in \hat{X}_0[k]$. 
Otherwise, if $y_k = 1$, fix $L_k$, an arbitrary $x_0 \in \hat{X}_0[k-1]$, and the control sequence $u_{0:k-1}$. 
Since $k$ is the last element of $L_k$ and $L_k$ is monotonically increasing, for every $j \in L_k$ with $j \neq k$ we have $j < k$. 
If we show that $x_0 \in \mathcal{E}(\mu_{kj}, P_{kj})$ for all $j \in L_k$ with $j < k$, this implies 
$x_0 \in \bigcap_{j < k,\, j \in L_k} \mathcal{E}(\mu_{kj}, P_{kj}) 
\implies x_0 \in \hat{X}_0[k]$, 
where the intersection corresponds to the estimator update in \hyperref[EST:line5]{Line 5}. 
Hence, by induction, $x_0 \in \hat{X}_0[k]$ for all $k \le k^*$. 

The remainder of the proof establishes $x_0\in\mathcal E(\mu_{kj},P_{kj})$ for each $k>j\in L_k$. We show this by recognizing that for any control sequence \(u\), if there exist time indices \(k\) and \(j\) with \(y_k=y_j=1\), then the difference of corresponding states \(x_k\) and \(x_j\), lies within a ball that can be determined from the available information, even though the true measurement ball \(\mathcal{B}(m,r)\) is unknown. By recording the controls applied up to time \(k\) and propagating this determinable ball backward through the linear dynamics, we obtain an ellipsoid guaranteed to contain the initial state \(x_0\). Start with an immediate observation about \(x_k\) and \(x_j\): Using \eqref{c}, we can write
\begin{equation}
  \|x_k-x_j\|\le2r. \label{eq:inside_cond}
\end{equation}
Setting \(A_{kj}=A^k - A^j\) and unrolling the dynamics we have
\[
x_k-x_j
= A_{kj}\,x_0
+ \sum_{m=0}^{k-1}A^{\,k-1-m}Bu_m
- \sum_{m=0}^{j-1}A^{\,j-1-m}Bu_m.
\]
Note that $u_m$ up to $m=k-1$ and $m=j-1$ always exist since $k>j$ and we are given $u_{0:k-1}$. Define
\[
d_{kj}
=\sum_{m=0}^{k-1}A^{\,k-1-m}Bu_m
\;-\;\sum_{m=0}^{j-1}A^{\,j-1-m}Bu_m, \mbox{so \ that}
\]
\[
x_k-x_j
= A_{kj}\,x_0 + d_{kj} \implies \|A_{kj}\,x_0 + d_{kj}\|\le2r.
\]
Squaring and expanding yields
\begin{align*}
x_0^\top A_{kj}^\top A_{kj}\,x_0
+2\,d_{kj}^\top A_{kj}\,x_0
+d_{kj}^\top d_{kj}
\le (2r)^2.
\end{align*}
Factoring out \({A_{kj}}^\top A_{kj}\) and completing the square, we obtain
\[
\Bigl(x_0+A_{kj}^{-1}d_{kj}\Bigr)^\top
\frac{A_{kj}^\top A_{kj}}{(2r)^2}
\Bigl(x_0+A_{kj}^{-1}d_{kj}\Bigr)
\le1.
\]
Identifying
$\mu_{kj}=-A_{kj}^{-1}d_{kj}$ and  
$P_{kj}=\frac{A_{kj}^\top A_{kj}}{(2r)^2}$, this inequality is equivalent to
$\bigl(x_0-\mu_{kj}\bigr)^\top P_{kj}\,\bigl(x_0-\mu_{kj}\bigr)\le1$,
where $\mu_{kj}$ and $P_{kj}$ are as defined in \hyperref[alg:subroutines]{Algorithm 2}. Then \(x_0\in\mathcal E(\mu_{kj},P_{kj})\) and the proof is complete.
\end{proof}

\begin{remark}
The definition of $\mu_{kj}$ and $P_{kj}$ is valid since $A_{kj}=A^k-A^j$ is invertible. 
Because $A$ is diagonalizable (or has a Jordan form) with eigenvalues $|\lambda|>1$, 
each block of $A_{kj}$ has entries $\lambda^k-\lambda^j\neq0$ for $k>j\ge0$, 
implying $\det(A_{kj})\neq0$.
\end{remark}

\begin{remark}
While we consider the single landmark case in this paper, extension to multiple landmarks is possible. With multiple landmarks and unknown signal--landmark association, \hyperref[alg:EST]{\textsc{Estimate}} must be modified. Suppose there are \(n\) landmarks with common detection radius \(r\), and let
\(L_k=\{k_1<\cdots<k_q\}\) denote the indices of \(q\) positive measurements observed up to time \(k\). To guarantee consistency without data association, we require \(q\ge n+1\). For each pair \((k_a,k_b)\) with \(1\le a<b\le q\), form the ellipsoid
\(\mathcal E_{ab}:=\mathcal E(\mu_{k_b k_a},P_{k_b k_a})\) as in Algorithm~2, and eliminate a candidate \(x_0\) only if it violates all pairwise constraints. Thus, line 5 should be changed in the following way:
\[
\hat X_0[w+1]
\;\gets \;
\hat X_0[w]
\ \cap\
\Bigl(\ \bigcup_{1\le a<b\le q}\mathcal E_{ab}\ \Bigr),
\]
where the union is taken over the \(\binom{q}{2}\) measurement pairs.
\end{remark}

\begin{corollary} \label{cor:landmark_est}
With the same definitions of $\hat{X}_0[\cdot]$ and $\hat{M}[\cdot]$, we have
\[
m \in \hat{M}[k^*].
\]
\end{corollary}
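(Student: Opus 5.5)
We argue by induction on $k \le k^*$, in parallel with the proof of Lemma~\ref{lem:init_state_est}, and use that lemma to control the sets $X_0'$ that enter the landmark update.

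The base case is immediate: $m \in M = \hat M[0]$. For the inductive step, assume $m \in \hat M[k-1]$. If $y_k = 0$, then $\hat M[k] = \hat M[k-1]$ by definition, so $m \in \hat M[k]$. If $y_k = 1$, Line~\ref{EST:line9} gives
\[
\hat M[k] = \bigcap_{j\in L_k}\Bigl(\textsc{Reach}(A,B,u_{0:j-1},\hat X_0[k])\oplus\mathcal B(0,r)\Bigr)\cap \hat M[k-1].
\]
The factor $\hat M[k-1]$ contains $m$ by the induction hypothesis. It therefore suffices to show that $m$ lies in each term of the intersection over $j\in L_k$.

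Fix $j\in L_k$. By Lemma~\ref{lem:init_state_est}, applied with $k^*$ replaced by $k$, we have $x_0\in\hat X_0[k]$; this set is exactly the $X_0'$ produced in Line~\ref{EST:line7} at step $k$. Since $\textsc{Reach}$ is the image of its last argument under the affine map $x_0\mapsto A^j x_0+\sum_{i<j}A^{j-1-i}Bu_i$, the true state satisfies
\[
x_j \in \textsc{Reach}(A,B,u_{0:j-1},\hat X_0[k]).
\]
Because $j\in L_k$, we have $y_j=1$, so $\|x_j-m\|\le r$ by \eqref{c}. Writing $m = x_j + (m-x_j)$ with $m-x_j\in\mathcal B(0,r)$ shows
\[
m\in \textsc{Reach}(A,B,u_{0:j-1},\hat X_0[k])\oplus \mathcal B(0,r).
\]
Intersecting over all $j\in L_k$ and with $\hat M[k-1]$ yields $m\in\hat M[k]$, which closes the induction and gives $m\in\hat M[k^*]$.

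The only delicate point is bookkeeping. The reachable sets in Line~\ref{EST:line9} are built from the freshly tightened set $X_0'=\hat X_0[k]$, not from $\hat X_0[k-1]$, so we must invoke Lemma~\ref{lem:init_state_est} at time $k$ itself rather than at $k-1$. This is legitimate because the lemma holds for every horizon $k\le k^*$. We also need that $L_k$ contains only indices with $y_j=1$, which holds by its definition.
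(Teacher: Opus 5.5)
Your proof is correct and follows the paper's argument essentially verbatim. Both proceed by induction on $k$ and dispose of the case $y_k=0$ trivially. For $y_k=1$, both invoke Lemma~\ref{lem:init_state_est} at time $k$ to get $x_0\in\hat X_0[k]$, push this through the affine \textsc{Reach} map to place each $x_j$ in its reachable set, and use $y_j=1$ to put $m$ in the $r$-bloated set before intersecting with $\hat M[k-1]$. Your explicit use of the induction hypothesis for the factor $\hat M[k-1]$, and your remark on which version of $\hat X_0$ enters Line~\ref{EST:line9}, are slightly more careful bookkeeping of the same proof.
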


\begin{proof}
Since $m \in M = \hat{M}[0]$, we proceed by induction on $k \le k^*$.  
If $y_k = 0$, then $\hat{M}[k] = \hat{M}[k-1]$, so $m \in \hat{M}[k]$.  
If $y_k = 1$, we show that the update performed in \hyperref[alg:EST]{\textsc{Estimate}} also preserves membership, i.e., $m \in \hat{M}[k]$. Consider any $\hat{X}_0[k]$ computed in \hyperref[EST:line7]{Line 7}
 of \hyperref[alg:EST]{\textsc{Estimate}}. 
By Lemma~\ref{lem:init_state_est}, for any $k\leq k^*$, we have
$x_0 \;\in\; \hat{X}_0[k]$.
Since the system is linear, propagating the tightened set $\hat{X}_0[k]$ through the dynamics preserves set membership. Hence, as computed in \hyperref[EST:line8]{Line 8}
 of 
\hyperref[alg:EST]{\textsc{Estimate}},
\begin{equation} \label{eq:linprop}
x_k \;\in\; \textsc{Reach}\!\left(A, B, u_{0:k-1}, \hat{X}_0[k]\right).
\end{equation}

Now, by the definition of $L_k$, for each $j \in L_k$ we have $y_j = 1$, 
which implies that the measurement point $m$ lies within a ball of radius $r$ 
centered at the corresponding state $x_j$, i.e.,
$m \in x_j \oplus \mathcal{B}(0,r)$. 
Using~\eqref{eq:linprop} and the definition of the Minkowski sum, this yields
\[
m \;\in\;
\textsc{Reach}\!\left(A, B,u_{0:j-1}, \hat{X}_0[k]\right)
\;\oplus\;
\mathcal{B}(0,r)
\quad \forall j \in L_k.
\]
Since this condition must hold for all indices in $L_k$, it follows that
\begin{align} \label{mupdate}
m \;\in\;&
\bigcap_{j \in L_k}
\left(
\textsc{Reach}\!\left(A, B, u_{0:j-1}, \hat{X}_0[k]\right)
\;\oplus\;
\mathcal{B}(0,r)
\right)
\cap \hat{M}[k-1] \notag\\
&=\; \hat{M}[k].
\end{align}

where~\eqref{mupdate} corresponds to the update performed in \hyperref[EST:line9]{Line 9}
 of 
\hyperref[alg:EST]{\textsc{Estimate}}. The proof is complete by induction.
\end{proof}

\begin{corollary} \label{cor:cur_state_est}
Let $y_{k^*} = 1$. With the same definitions of $\hat{X}_0[\cdot]$, $\hat{M}[\cdot]$, and $\hat{X}_k[\cdot]$, we have
\[
x_{k^*} \in \hat{X}_{k^*}[k^*].
\]
\end{corollary}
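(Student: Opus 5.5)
Since $y_{k^*}=1$, the estimates at time $k^*$ are produced by a call to \hyperref[alg:EST]{\textsc{Estimate}}, and $\hat{X}_{k^*}[k^*]$ is the set computed in \hyperref[EST:line8]{Line 8}, namely
\[
\hat{X}_{k^*}[k^*] = \textsc{Reach}\!\left(A,B,u_{0:k^*-1},\hat{X}_0[k^*]\right)\cap\left(\hat{M}[k^*-1]\oplus\mathcal{B}(0,r)\right).
\]
I would show that $x_{k^*}$ belongs to each of the two sets in this intersection separately, and conclude that it belongs to their intersection.

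\textbf{First set.} By Lemma~\ref{lem:init_state_est}, $x_0\in\hat{X}_0[k^*]$. Unrolling the dynamics \eqref{eq:system_dynamics} under the applied inputs gives
\[
x_{k^*}=A^{k^*}x_0+\sum_{i=0}^{k^*-1}A^{k^*-1-i}Bu_i .
\]
This is exactly an element of $\textsc{Reach}(A,B,u_{0:k^*-1},\hat{X}_0[k^*])$ by the definition of the operator, as already noted in \eqref{eq:linprop}.

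\textbf{Second set.} Because $y_{k^*}=1$, we have $\|x_{k^*}-m\|\le r$, so $x_{k^*}=m+(x_{k^*}-m)\in\{m\}\oplus\mathcal{B}(0,r)$. The landmark argument passed to the call is $\hat{M}[k^*-1]$. Applying Corollary~\ref{cor:landmark_est} with horizon $k^*-1$ (or trivially $m\in M$ when $k^*-1=0$) gives $m\in\hat{M}[k^*-1]$. Hence $x_{k^*}\in\hat{M}[k^*-1]\oplus\mathcal{B}(0,r)$.

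The only delicate point is bookkeeping: Line 8 intersects with the input landmark set, not the updated $M'$. The corollary still holds for every earlier index, since the inductive proof of Corollary~\ref{cor:landmark_est} applies to any $k\le k^*$. Even if one reads Line 8 as using $\hat{M}[k^*]$, Corollary~\ref{cor:landmark_est} gives $m\in\hat{M}[k^*]$ directly, and the same argument goes through. No genuine obstacle is expected beyond matching indices.
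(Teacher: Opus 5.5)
Your proof is correct and takes essentially the paper's approach. You place $x_{k^*}$ in the reachable set via Lemma~\ref{lem:init_state_est} and \eqref{eq:linprop}, and in the bloated landmark estimate via $y_{k^*}=1$ together with Corollary~\ref{cor:landmark_est}. Your index bookkeeping matches Line~8 more faithfully than the paper's write-up, which intersects with $\hat{M}[k^*]$ rather than the input $\hat{M}[k^*-1]$; since $\hat{M}[k^*]\subseteq\hat{M}[k^*-1]$ by Line~9, either reading works.
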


\begin{proof}
Let $k^*$ be such that $y_{k^*} = 1$. 
From Corollary~\ref{cor:landmark_est}, we have $m \in \hat{M}[k^*]$. 
By the definition of $L_{k^*}$, this implies 
$m \in x_{k^*} \oplus \mathcal{B}(0,r)$, or equivalently, 
$x_{k^*} \in \hat{M}[k^*] \oplus \mathcal{B}(0,r)$. 
Combining this with~\eqref{eq:linprop} gives
\begin{equation} \label{muppdate}
x_{k^*} \in 
\textsc{Reach}\!\left(A, B, u_{0:k^*-1}, \hat{X}_0[k]\right)
\cap
\bigl(\hat{M}[k^*] \oplus \mathcal{B}(0,r)\bigr)
= \hat{X}_{k^*}[k^*],
\end{equation}
where~\eqref{muppdate} corresponds to the update performed in \hyperref[EST:line8]{Line 8}
 of 
\hyperref[alg:EST]{\textsc{Estimate}}.
\end{proof}

Intuitively, Corollary~\ref{cor:landmark_est} follows from the fact that the state $x_k$ 
always lies within the reachable set obtained by propagating $\hat{X}_0[k]$ through the 
linear dynamics. When $y_k = 1$, bloating this reachable set with $\mathcal{B}(0,r)$ 
captures the landmark positions $m \in \hat{M}[k]$ for which the measurement at time $k$ 
can be positive. We do this for all $i \in L_k$ with $y_i = 1$. Conversely, by similar reasoning, the true state $x_k$ must lie in both 
$\hat{M}[k] \oplus \mathcal{B}(0,r)$ and the reachable set at time $k$, 
which is precisely the statement of Corollary~\ref{cor:cur_state_est}. A conceptual illustration of the \hyperref[alg:EST]{\textsc{Estimate}} is shown in Figure~\ref{fig:state_landmark}.

\begin{figure}[htbp]
  \centering
  \includegraphics[width=0.5\textwidth]{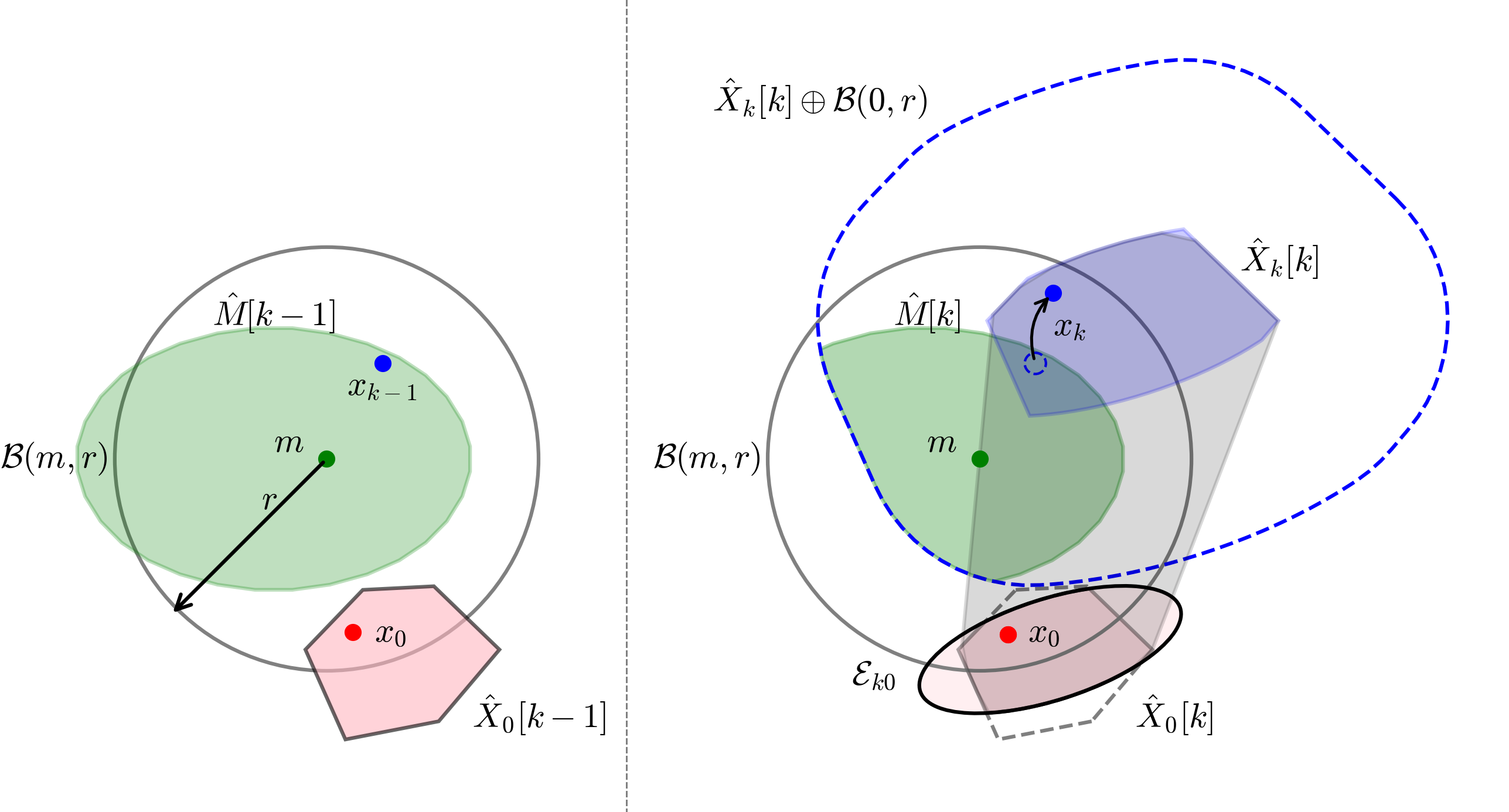}
  \caption{Initial-state estimate $\hat{X}_0[\cdot]$ and landmark estimate $\hat{M}[\cdot]$ at times $(k\!-\!1)$ (left) and $k$ (right). The state remains within the sensing region over $[0,k]$. For clarity, only one ellipsoid $\mathcal{E}(\mu_{k0},P_{k0})$—computed in \hyperref[alg:EST]{\textsc{Estimate}}~\hyperref[EST:line5]{Line~5}—is shown intersecting $\hat{X}_0[k-1]$. The gray tube illustrates $\textsc{Reach}(A,B,u_{0:k-1},\hat{X}_0[k])$. Since the intersection in~\eqref{muppdate} leaves this reachable set unchanged, it yields the updated current state estimate. Meanwhile, the same reachable set—only one shown for clarity—is inflated by $\mathcal{B}(0,r)$ to update $\hat{M}$.
}
  \label{fig:state_landmark}
\end{figure}

\section{Recovery Control Synthesis} \label{sec:recovery}
So far we have shown that as long as \(y=1\), we can refine the estimates for initial state $\hat{X}_0[k]$, current state $\hat{X}_k[k]$, and landmark $\hat{M}[k]$, and we have introduced spherical Voronoi partition and some of their properties. The instability of the system makes it difficult to keep the agent inside \(\mathcal{B}(m,r)\) permanently. Any control that stabilizes a point in \(\hat{X}_k[k]\) that is other than the true state \(x_k\) will cause \(x_k\) to diverge exponentially from \(\mathcal{B}(m,r)\) and the signal will be lost. This section shows that if \(y_k = 1\) and \(y_{k+1} = 0\) for some \(k \geq 0\), then under suitable conditions on the degree of instability and the size of the current state estimate, there exists a finite sequence of control inputs such that \(x_{k'} \in \mathcal{B}(m,r)\) for some \(k' > k\). 


From here until the end of this section only, we change the time indices. Let $k$ be the last time $y_k=1$ and $y_{k+1}=0$. For this $k$ we reset the time index and denote \(x_0 := x_k\), \( X_0 := \hat{X}_k[k] \) and $u_0 :=u_k$, without any loss of generality. Starting with this new time index, we construct the recovery signal beginning with the base case where $X_{0}$ is a ball and $B$ is the identity matrix, as defined in Definition~\ref{def:recovery_control_signal}. Lemma~\ref{lem:recoverability_base} shows that this control signal guarantees recovery of the system to the sensing region. We then relax these assumptions, extending the result in Corollary~\ref{cor:recoverability_region} to arbitrary compact initial sets and further generalizing it in Definition~\ref{def:generalized_RCS} and Corollary~\ref{cor:recoverability_region_control} to controllable systems.

\begin{definition}[Recovery Control Signal (RCS) ]\label{def:recovery_control_signal}
 Given \(x_0\in \mathcal{B}(c_0,r_0)\) for some $c_0$ and $r_0>0$ and basis vectors \(p_{1:N}\) defining $\alpha-$SVP for some $\alpha$, the \emph{recovery control signal} is recursively defined as
\begin{equation}
\label{eq:control_def}
u_i
\;=\;
\bigl(r - \|A^{\,i+1}-I\|\,r_0\bigr)\,p_i
\;-\;(A^{\,i+1}-I)c_0)
\;-\;\sum_{j=0}^{i-1}A^{\,i-j}u_j,
\
i=1,\dots,N.
\end{equation}
where $u_0$ is arbitrary.
\end{definition}

\begin{lemma}[Recoverability via Recovery Control Signal]\label{lem:recoverability_base}
For a $\tfrac{\sqrt{3}}{2}$-SVP with size $N$ and alignment $\eta$, set 
$D = \max_{1 \le k \le N} \|A^{k+1} - I\|$ and take $B = I$. If $x_0\in \mathcal{B}(m,r)\cap \mathcal{B}(c_0,R_0)$ and
\begin{equation}\label{eq:diam_assump}
r_0 \le \frac{r\,(1-\eta)}{D\,(3-\eta)},
\end{equation}
then for any RCS, $u_{1:N}$, $x_{i+1} \in \mathcal{B}(m,r)$ for some $i \in \{1,..,N\}$.
\end{lemma}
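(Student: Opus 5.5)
The plan is to write the closed-loop state under the RCS explicitly, observe that the recursive term in \eqref{eq:control_def} cancels all the earlier inputs, and then use Proposition~\ref{prop:SVP_ball} to pick the one index $i$ whose Voronoi sector lies inside $\mathcal{B}(m,r)$. I read the hypothesis as $x_0\in\mathcal{B}(m,r)\cap\mathcal{B}(c_0,r_0)$, i.e.\ $R_0=r_0$, which is what the RCS is built from. Throughout, write $D_i=\|A^{i+1}-I\|\le D$ and $s_i=r-D_i r_0$.

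\textbf{Step 1 (explicit state).} With $B=I$, unrolling \eqref{eq:system_dynamics} gives
\[
x_{i+1}=A^{i+1}x_0+\sum_{j=0}^{i-1}A^{i-j}u_j+u_i .
\]
Substituting \eqref{eq:control_def} for $u_i$, the sum over $j<i$, including the arbitrary $u_0$, cancels exactly. This leaves
\[
x_{i+1}=x_0+s_i\,p_i+e_i,\qquad e_i:=(A^{i+1}-I)(x_0-c_0),\quad \|e_i\|\le D_i r_0\le D r_0 .
\]
So the $i$-th input steers the state to the point at distance $s_i$ from $x_0$ in direction $p_i$, up to an error of size at most $D r_0$.

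\textbf{Step 2 (choice of $i$).} Apply Proposition~\ref{prop:SVP_ball} with $x=x_0\in\mathcal{B}(m,r)$ to the $\tfrac{\sqrt3}{2}$-SVP of $\partial\mathcal{B}(x_0,r)$. This SVP is generated by the same directions $p_{1:N}$, translated to $x_0$. It yields an index $i$ with $\mathrm{int}(R_i)\subseteq\mathcal{B}(m,r)$, and it remains to show that $x_{i+1}$ lies in this region.

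\textbf{Step 3 (margin).} The cone over $R_i$ with apex $x_0$ is bounded by the bisector hyperplanes $\{z:\langle z-x_0,\,p_i-p_j\rangle=0\}$. The point $x_0+s_ip_i$ lies at distance
\[
\frac{s_i\,(1-\langle p_i,p_j\rangle)}{\|p_i-p_j\|}\;\ge\;\frac{s_i(1-\eta)}{2}
\]
from each such hyperplane, using $\|p_i-p_j\|\le 2$. Condition \eqref{eq:diam_assump} gives $Dr_0\le r(1-\eta)/(3-\eta)$, hence $s_i\ge 2r/(3-\eta)$. Therefore
\[
\frac{s_i(1-\eta)}{2}\;\ge\;\frac{r(1-\eta)}{3-\eta}\;\ge\;Dr_0\;\ge\;\|e_i\| ,
\]
so $x_{i+1}$ stays inside the cone. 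Radially, $\|x_{i+1}-x_0\|\le s_i+D_ir_0=r$. Thus $x_{i+1}$ lies in the spherical sector $\{x_0+t w: w\in R_i-x_0,\ 0\le t\le 1\}$. The value $3-\eta$ in the threshold is exactly what balances the angular margin against the radial slack.

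\textbf{Main obstacle.} Step 3 places $x_{i+1}$ in the full spherical sector, whereas Proposition~\ref{prop:SVP_ball} is stated for $\mathrm{int}(R_i)=\mathrm{conv}(R_i\cup\{x_0\})$. These two sets agree on the cone and on $R_i$ itself, but may differ near the curved cap. I would resolve this by reopening the proof of Proposition~\ref{prop:SVP_ball} (Appendix~\ref{appendix:SVP_ballproof}). Its natural argument shows that every point of $R_i$ lies in $\mathcal{B}(m,r)$ because all of $R_i$ is within angle $30^\circ$ of a suitable direction toward $m$. Since $\mathcal{B}(m,r)$ is convex and contains both $x_0$ and $R_i$, it contains every segment $[x_0,w]$ with $w\in R_i$, and the union of these segments is exactly the sector, so the stronger containment follows. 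If that route fails, the fallback is to measure the radial margin against the chord facets, which would cost a constant in \eqref{eq:diam_assump}.
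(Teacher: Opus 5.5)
Your proof is correct and follows the paper's argument. It uses the same cancellation $x_{i+1}=x_0+s_ip_i+(A^{i+1}-I)(x_0-c_0)$. Your bisector-distance margin is simply the paper's two Cauchy--Schwarz bounds on $\langle\beta_i,p_i\rangle$ and $\langle\beta_i,p_j\rangle$ merged into one, and it gives the identical requirement $\|(A^{i+1}-I)(x_0-c_0)\|\le s_i(1-\eta)/2$.

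Your ``main obstacle'' is not a real one. Every segment $[x_0,w]$ with $w\in R_i$ lies in $\mathrm{conv}(R_i\cup\{x_0\})$ by definition, so the spherical sector is contained in $\mathrm{int}(R_i)$; the two sets in fact coincide because the Voronoi cone is convex. Proposition~\ref{prop:SVP_ball} therefore applies directly, and you do not need to reopen its proof.
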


\begin{proof}
Let us fix $x_0 \in \mathcal{B}(m,r) \cap \mathcal{B}(c_0, r_0)$ and RCS $u_{1:N}$. We will first show $x_{i+1} \in int(R_i)$ for every $i$ where $R_i$ is the $i$th partition of $\scalebox{1.2}{$\partial$}\mathcal{B}(x_0,r)$. Then we use the result from Proposition~\ref{prop:SVP_ball} which tells $int(R_i) \subseteq \mathcal{B}(m,r)$ for some $i$ since $x_0 \in \mathcal{B}(m,r)$. Finally, since $x_{i+1} \in int(R_i)\ \forall i$ and $int(R_i) \subseteq \mathcal{B}(m,r)$ for some $i$, we will conclude $x_{i+1} \in \mathcal{B}(m,r)$ for some $i$.

Before proceeding, we outline what follows. We will start by successively rewriting $x_{i+1}$ in the form $x_{i+1} = x_0 + \beta_i$, where $\beta_i$ is decomposed as $\beta_i = \Delta + e\,p_i$ for explicitly defined $\Delta$ and $e$. The remainder of the proof will then establish that $\|\beta_i\| \le r \ \forall i$ and that $\langle \beta_i, p_i\rangle \ge \langle \beta_i, p_j\rangle\ \forall i \neq j$. These two properties together imply $x_{i+1} \in int(R_i) \ \forall i$. Starting from \eqref{eq:system_dynamics}, we have
\[
x_{i+1}
= A^{\,i+1}x_0 + \sum_{j=0}^{i}A^{\,i-j}u_j.
\]
The summation part can be separated into two parts: one up to $i-1$ and the $u_i$ term. Substituting \eqref{eq:control_def} for $u_i$, the terms up to $i-1$ cancel, and we obtain
\begin{equation}
\label{eq:grouped}
A^{\,i+1}x_0
+ \Bigl(r - \|A^{\,i+1}-I\|\,r_0\Bigr)p_i
- (A^{\,i+1}-I)c_0.
\end{equation}
Now, we can set $x_{i+1}$ to be \eqref{eq:grouped} and re-write the expression as $x_{i+1}=x_0+(\Delta+ep_i)$
where
\begin{equation} \label{e:ineq}
\Delta = (A^{\,i+1}-I)(x_0-c_0),
\quad
e = r - \|A^{\,i+1}-I\|\,r_0 
\end{equation}
Then, we bound $\Delta$ in the following way:
\begin{equation}\label{eq:deltabound}
\|\Delta\|\le \|A^{\,i+1}-I\|\,r_0 \le \frac{1-\eta}{3-\eta}\,r  
\end{equation}
where first inequality is from operator norm inequality and \(\|x_0-c_0\|\le r_0\), the second is from \eqref{eq:diam_assump}.  Now, define \(\beta_i = \Delta + e\,p_i\), so that $x_{i+1}=x_0+\beta_i$.  By the triangle inequality,
\begin{equation}
\label{eq:norm_beta}
\|\beta_i\|\le \|\Delta\| + e\|p_i\| \le \|A^{\,i+1}-I\|\,r_0 + (r - \|A^{\,i+1}-I\|\,r_0) = r.
\end{equation}
By Cauchy–Schwarz, $
\langle\Delta,p_i\rangle \ge -\|\Delta\|,
\
\langle\Delta,p_j\rangle \le \|\Delta\|.
$ Hence
\begin{equation}
\label{eq:inner1}
\langle \beta_i,p_i\rangle
= e + \langle \Delta,p_i\rangle
\ge e - \|\Delta\|,
\
\langle \beta_i,p_j\rangle
= e\,\langle p_i,p_j\rangle + \langle \Delta,p_j\rangle
\le e\,\eta + \|\Delta\|.
\end{equation}
where inequality in the second expression comes from \eqref{eq:alpha_eta} as $e\geq 0$ by \eqref{e:ineq}, \eqref{eq:diam_assump} and $0 \leq \frac{1-\eta}{3-\eta} \leq 1$ . Now using the definition of \(e\) at \eqref{e:ineq} with the second inequality in \eqref{eq:deltabound}, we write
\begin{equation}
\label{eq:e_lower}
e \ge r - \frac{1-\eta}{3-\eta}\,r = \frac{2r}{3-\eta}
\quad\Longrightarrow\quad
\frac{e(1-\eta)}{2} \ge \frac{(1-\eta)}{3-\eta}r.
\end{equation}
After the multiplication of both sides with $\frac{1-\eta}{2}$ at \eqref{eq:e_lower}, we can use \eqref{eq:deltabound} once again with \eqref{eq:e_lower} to obtain \begin{equation} \label{eq:delta_bound}
\|\Delta\| \le \frac{e(1-\eta)}{2}.
\end{equation}
Plugging \eqref{eq:delta_bound} to each inner product at \eqref{eq:inner1} yields the following:
\[
\langle \beta_i,p_i\rangle \ge e - \tfrac{1-\eta}{2}e = \tfrac{1+\eta}{2}e,
\quad
\langle \beta_i,p_j\rangle \le e\,\eta + \tfrac{1-\eta}{2}e = \tfrac{1+\eta}{2}e,
\]
so \(\langle \beta_i,p_i\rangle\ge\langle \beta_i,p_j\rangle \ \forall i \neq j\).  Together with $\|\beta_i\|\le r$ from \eqref{eq:norm_beta} and by the definition of SVP in \eqref{eq:svp_def}, we have
\begin{equation}
\label{eq:in_cell}
x_{i+1} = x_0 + \beta_i \;\in\; int(R_i) \quad \forall i.
\end{equation}
By Proposition ~\ref{prop:SVP_ball} there exists some \(i\) with \(int(R_i)\subseteq \mathcal{B}(m,r)\), and by \eqref{eq:in_cell} for that \(i\), \(x_{i+1}\in \mathcal{B}(m,r)\).
\end{proof}

\subsection{Generalizing the Recovery Control Signal}

We have shown that defining $X_0$ as a ball $\mathcal{B}(c_0, R_0)$ ensures signal recovery under the condition~\eqref{eq:diam_assump}, which relates the system’s instability rate to the initial uncertainty radius $r_0$ through the parameter $D$, a quantity that generally increases as the magnitude of $A$’s eigenvalues grows.
However, this description of \(X_0\) is restrictive and we can extend Lemma~\ref{lem:recoverability_base} with general compact $X_0$. To do so, we note that any bounded region $X_0$ can be over-approximated by an enclosing sphere whose radius is expressed in terms of \(\text{diam}(X_0)\) and \(n\), according to Jung’s theorem \cite{Jung1901}.

\begin{corollary}[Recoverability for General \(X_0\)]\label{cor:recoverability_region}
Let $X_0 \subset \mathbb{R}^n$, and let $c_0$ be the center of its smallest enclosing ball. Set the maximum norm $D$ and alignment $\eta$ as in Lemma~\ref{lem:recoverability_base}. If
\begin{equation}\label{cor_cond}
\text{diam}(X_0)\le\frac{r\,(1-\eta)}{D\,(3-\eta)}\sqrt{\frac{2(n+1)}{n}},
\end{equation}
then for any \(x_0\in X_0\cap \mathcal{B}(m,r)\), there exists some RCS, $u_{1:N}$, such that $x_{i+1}\in \mathcal{B}(m,r)$.
\end{corollary}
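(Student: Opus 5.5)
The plan is to reduce the corollary to Lemma~\ref{lem:recoverability_base}. We enclose $X_0$ in its smallest enclosing ball $\mathcal{B}(c_0,r_0)$ and use Jung's theorem to bound $r_0$ by $\text{diam}(X_0)$. Jung's theorem \cite{Jung1901} says that every bounded set $X_0\subset\mathbb{R}^n$ lies in a closed ball of radius
\[
r_0 \le \text{diam}(X_0)\sqrt{\frac{n}{2(n+1)}} .
\]
The smallest enclosing ball therefore has radius at most this value, and we take its center as $c_0$, as the statement prescribes. Then $X_0\subseteq\mathcal{B}(c_0,r_0)$, and any $x_0\in X_0\cap\mathcal{B}(m,r)$ also lies in $\mathcal{B}(m,r)\cap\mathcal{B}(c_0,r_0)$.

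The second step checks that the diameter condition implies the radius condition \eqref{eq:diam_assump} of Lemma~\ref{lem:recoverability_base}. We combine the Jung bound with the hypothesis on $\text{diam}(X_0)$ and carry the factor $\sqrt{2(n+1)/n}$ through. Here I expect the main obstacle, which is purely bookkeeping. The hypothesis as written multiplies by $\sqrt{2(n+1)/n}\ge 1$. Jung's bound, however, gives $r_0\le \text{diam}(X_0)\sqrt{n/(2(n+1))}$, so we need
\[
\text{diam}(X_0)\sqrt{\frac{n}{2(n+1)}}\le \frac{r(1-\eta)}{D(3-\eta)},
\]
which is exactly the hypothesis rearranged. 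Thus the constant $\sqrt{2(n+1)/n}$ appears on the correct side, and we obtain $r_0\le r(1-\eta)/(D(3-\eta))$.

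Finally, we define the RCS $u_{1:N}$ by \eqref{eq:control_def} with this $c_0$ and $r_0$, using the basis vectors of a $\tfrac{\sqrt3}{2}$-SVP, which exists by Proposition~\ref{prop:SVP_exist}. We also take $B=I$, as in the lemma, with $D$ and $\eta$ as defined there. All hypotheses of Lemma~\ref{lem:recoverability_base} now hold. The lemma gives some $i\in\{1,\dots,N\}$ with $x_{i+1}\in\mathcal{B}(m,r)$, which is the claim.

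The construction of $u$ needs only $c_0$ and $r_0$, which are computable from $X_0$, and not the unknown $x_0$ or $m$. So the same RCS works uniformly for every $x_0\in X_0\cap\mathcal{B}(m,r)$.
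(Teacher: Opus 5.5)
Your proposal is correct and follows essentially the same route as the paper: Jung's theorem bounds the enclosing radius about $c_0$ by $\sqrt{n/(2(n+1))}\,\mathrm{diam}(X_0)$, the hypothesis rearranges to condition \eqref{eq:diam_assump}, and Lemma~\ref{lem:recoverability_base} finishes the argument. The only cosmetic difference is that you take $r_0$ to be the actual minimal enclosing radius, whereas the paper uses the Jung bound $r_J$ itself as $r_0$; either choice satisfies the lemma's hypotheses.
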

\begin{proof}
See Appendix~\ref{appendix:recoverability_regionproof}.
\end{proof}

The preceding result shows that finite--time recoverability holds for any bounded $X_0$ under the simplifying assumption $B = I$. This assumption corresponds to a fully actuated system and is used only for clarity. The same conclusion extends to under-actuated but controllable pairs $(A,B)$: if recovery is possible with single--step inputs when $B = I$, then it remains possible by grouping inputs into blocks of length equal to the controllability index. In Definition~\ref{def:generalized_RCS}, these blocks are indexed by $i$, as in \eqref{eq:control_def}, with the individual signals inside each block labeled by $j$.

\begin{definition}[Generalized RCS]\label{def:generalized_RCS}
Suppose $(A,B)$ is controllable in $\bar{n}$ steps. For each index \(i=1,\dots,N\) and sub-index \(j=1,\dots,\bar{n}\) set
\begin{equation}\label{eq:control_def_AB}
\begin{aligned}
u_{(i-1)\bar{n}+j}
&= \bigl(A^{\,\bar{n}-j}B\bigr)^{\top} W^{-1} \Bigl[
   (r - \|A^{\,\bar{n}i+1}-I\|\,R_{0})\,p_{i} \\
&\qquad\quad {} - (A^{\,\bar{n}i+1}-I)\,c_{0}
   - \sum_{k=0}^{(i-1)\bar{n}} A^{\,\bar{n}i-k}B\,u_k
   \Bigr].
\end{aligned}
\end{equation}

with arbitrary $u_0$ where $W$ is the $\bar{n}$‑step controllability Gramian defined  as $W=\sum_{k=0}^{\bar{n}-1}A^{k}B\,B^{T}(A^{k})^{T}$. Then the sequence $u_{1:N\bar{n}}$ is a generalized RCS.
\end{definition}

\begin{corollary}[Generalized Recoverability in Controllable Systems]\label{cor:recoverability_region_control}
Suppose $(A,B)$ is controllable and $X_0 \subset \mathbb{R}^n$ is bounded. 
Set alignment $\eta$ and size $N$ as in Lemma~\ref{lem:recoverability_base} and $\bar{D} := \max_{1 \le k \le N\bar{n}} \|A^{k+1} - I\|$. If
\begin{equation}\label{general_cond}
\text{diam}(X_0)\le\frac{r\,(1-\eta)}{\bar{D}\,(3-\eta)}\sqrt{\frac{2(n+1)}{n}},
\end{equation}
then for any \(x_0\in X_0\cap \mathcal{B}(m,r)\), there exists some generalized RCS, $u_{1:N\bar{n}}$, such that $x_{i+1}\in \mathcal{B}(m,r)$.
\end{corollary}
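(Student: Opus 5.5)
The plan is to reduce Corollary~\ref{cor:recoverability_region_control} to Corollary~\ref{cor:recoverability_region}, and ultimately to Lemma~\ref{lem:recoverability_base}. Steering happens in blocks of length $\bar{n}$, and at the end of each block the state should coincide with the state the fully actuated RCS would have produced. I would sample the state only at block ends, $t_i := \bar{n} i + 1$ for $i=1,\dots,N$.

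\textbf{Step 1: enclosing ball.} By Jung's theorem, $X_0$ lies in its smallest enclosing ball $\mathcal{B}(c_0,R_0)$ with
\[
R_0 \le \mathrm{diam}(X_0)\sqrt{\tfrac{n}{2(n+1)}}.
\]
Under \eqref{general_cond} this gives $R_0 \le \frac{r(1-\eta)}{\bar D(3-\eta)}$, exactly as in the proof of Corollary~\ref{cor:recoverability_region}.

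\textbf{Step 2: exact transfer within a block.} Unrolling the dynamics gives
\[
x_{t_i} = A^{t_i}x_0 + \sum_{k=0}^{(i-1)\bar n} A^{\bar n i - k} B u_k + \sum_{j=1}^{\bar n} A^{\bar n - j} B u_{(i-1)\bar n + j}.
\]
The last sum equals $\mathcal{C}\mathcal{C}^\top W^{-1} v_i$, where $\mathcal{C} = [A^{\bar n-1}B,\dots,B]$ and $v_i$ is the bracket in \eqref{eq:control_def_AB}. Since $\mathcal{C}\mathcal{C}^\top = W$, and $W$ is invertible by $\bar n$-step controllability, this sum is exactly $v_i$. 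The previous-input terms then cancel, and we obtain
\[
x_{t_i} = A^{t_i}x_0 + (r - \|A^{t_i}-I\|R_0)p_i - (A^{t_i}-I)c_0,
\]
which is \eqref{eq:grouped} with $A^{i+1}$ replaced by $A^{t_i}$. I expect the main obstacle to be the index bookkeeping here: checking that the prior-input sum in \eqref{eq:control_def_AB} matches the true contribution of $u_0,\dots,u_{(i-1)\bar n}$ to $x_{t_i}$, including the treatment of $u_0$ and the $+1$ offset. I would settle this by explicit unrolling.

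\textbf{Step 3: replay the base lemma.} With $\Delta = (A^{t_i}-I)(x_0-c_0)$ and $e = r - \|A^{t_i}-I\|R_0$, the argument of Lemma~\ref{lem:recoverability_base} goes through verbatim. The only input it needs is the bound $\|A^{t_i}-I\| \le \bar D$. This holds because $t_i - 1 = \bar n i \le N\bar n$, so $\bar D$ as defined covers these exponents. Together with Step 1, this gives $\|\Delta\| \le \frac{1-\eta}{3-\eta}r$. The lemma's argument then yields $\|\beta_i\| \le r$ and $\langle\beta_i,p_i\rangle \ge \langle\beta_i,p_j\rangle$ for all $j \ne i$, so $x_{t_i} \in int(R_i)$ for every $i$, where $R_{1:N}$ is the $\tfrac{\sqrt3}{2}$-SVP of $\partial\mathcal{B}(x_0,r)$.

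\textbf{Step 4: conclude.} Proposition~\ref{prop:SVP_ball} applies because $x_0 \in \mathcal{B}(m,r)$, so some $i$ has $int(R_i) \subseteq \mathcal{B}(m,r)$. For that $i$, $x_{t_i} \in \mathcal{B}(m,r)$, which is the claimed recovery (with the time index read as $t_i$).
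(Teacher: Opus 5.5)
Your proposal is correct and follows essentially the same route as the paper. It uses Jung's theorem to place $X_0$ inside $\mathcal{B}(c_0,R_0)$, the Gramian cancellation $\sum_{j=1}^{\bar n}A^{\bar n-j}B(A^{\bar n-j}B)^\top=W$ to reproduce \eqref{eq:grouped} at the block-end times $\bar n i+1$, and then the reasoning of Lemma~\ref{lem:recoverability_base} with $\bar D$ in place of $D$, followed by Proposition~\ref{prop:SVP_ball}; your checks that $\bar n i\le N\bar n$ (so $\bar D$ bounds $\|A^{\bar n i+1}-I\|$) and that the conclusion should read $x_{\bar n i+1}\in\mathcal{B}(m,r)$ supply bookkeeping the paper leaves implicit.
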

\begin{proof}
See Appendix~\ref{appendix:recoverability_region_controlproof}.
\end{proof}
\section{Putting It All Together} \label{sec:combine}
In the previous section, we showed that one-time recovery is possible under suitable conditions, and in Section \ref{sec:estimate}, we introduced a method to refine the initial-state uncertainty \( \hat{X}_0[k] \) whenever \( y_k = 1 \). We now show  that, for any increasing sequence of indices \( k_1 < k_2 < \ldots \) with \( y_{k_i} = 1 \), the size of the current state estimate set \( \hat{X}_{k_i}[k_i] \) remains upper bounded by a non-expanding value. Through Theorem~\ref{thm:main_thm}, we show that this sufficiently small bound ensures that, even under intermittent signal loss, the system repeatedly returns to $\mathcal{B}(m, r)$. This mechanism is realized in \hyperref[alg:ACT-LOC]{\textsc{ActiveLocalize}}, which unifies state estimation and recovery control synthesis.



\begin{algorithm}[t]
\caption{\textsc{ActiveLocalize}(\(X_0,\, M,\, r,\, \bar{n}\))}
\label{alg:ACT-LOC}
$k\!\gets\!0$;\ $\hat{X}_0[0]\!\gets\!X_0$;\ $\hat{M}[0]\!\gets\!M$;\ $L\!\gets\!\emptyset$;

\While{true}{
  observe $y_k$ at $x_k$\;
    \If{$y_k = 1$}{
        $L \gets L\cup\{k\}$;\ $u_{k}\!\gets$ arbitrary ctrl.;\
        $(\hat{X}_0[k],\hat{X}_k[k],\hat{M}[k]) \gets 
          \hyperref[alg:EST]{\textsc{Estimate}}(\hat{X}_0[k-1],\hat{M}[k-1],L,u_{1:k-1})$;\label{ACT-LOC:line5}\ 
        $x_{k+1}\!\gets\!A x_k + B u_k$;\ $k\!\gets\!k+1$ \tcp*{Update system} 
    }
  \Else{
    \For{$j=1$ \KwTo $N\bar{n}$}{
      $u_{k+j-1}\!\gets$ RCS[j] \tcp*{Compute from Def.~\ref{def:generalized_RCS}}\ 
      $x_{k+j}\!\gets\!A x_{k+j-1}+B u_{k+j-1}$;\ observe $y_{k+j}$\;
    \If{$y_{k+j}=1$}{
      $\hat{X}_0[k+j]\!\gets\!\hat{X}_0[k];\ \hat{M}[k+j]\!\gets\!\hat{M}[k]$  \label{ACT-LOC:line11}\;
      $k\!\gets\!k+j$;\ \textbf{break}\;
    }
    }
  }
}
\end{algorithm}

\begin{theorem} \label{thm:main_thm}
Suppose $(A,B)$ is controllable with controllability index $\bar{n}$. 
Let the size $N$, alignment $\eta$, and $\bar{D}$ be given, and let $M$ satisfy
\begin{equation}\label{eq:general-cond}
\frac{\mathrm{diam}(M)}{r} \;\le\; 
\frac{1-\eta}{\bar{D}\,(3-\eta)} 
\sqrt{\frac{2(n+1)}{n}} - 2.
\end{equation}
Then \hyperref[alg:ACT-LOC]{\textsc{ActiveLocalize}} achieves $y = 1$ infinitely often, with successive positive measurements separated by at most $N\bar{n}$ time steps.
\end{theorem}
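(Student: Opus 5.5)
The argument is an induction over the successive positive-measurement times $k_1 < k_2 < \cdots$ produced by \textsc{ActiveLocalize}, with $k_1 = 0$ because $x_0 \in \mathcal{B}(m,r)$ is assumed. The inductive claim has two parts: (i) at each $k_i$ the current-state estimate satisfies $x_{k_i} \in \hat{X}_{k_i}[k_i]$; (ii) $\mathrm{diam}(\hat{X}_{k_i}[k_i])$ satisfies the recoverability bound \eqref{general_cond} of Corollary~\ref{cor:recoverability_region_control}. Given both parts, the corollary produces $k_{i+1}$ with $k_{i+1} - k_i \le N\bar{n}$. Iterating gives infinitely many positive measurements, separated by at most $N\bar{n}$ steps, which is the statement of the theorem.

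\emph{Part (i).} This is Corollary~\ref{cor:cur_state_est} applied with $k^* = k_i$. To use it I have to check that the estimator updates in \textsc{ActiveLocalize} match the recursion assumed in Lemma~\ref{lem:init_state_est}. On a negative measurement the estimates are frozen; on a positive one, \textsc{Estimate} is called with the positive-index set $L$. The recovery loop stores $\hat{X}_0[k+j] \gets \hat{X}_0[k]$ and $\hat{M}[k+j] \gets \hat{M}[k]$ and then breaks; control returns to the $y_k=1$ branch, which calls \textsc{Estimate}. This is the "unchanged on $y=0$" rule, so the lemma and corollaries apply verbatim.

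\emph{Part (ii).} This is the key geometric step and the one I expect to be the main obstacle. Line~\ref{EST:line8} of \textsc{Estimate} gives
\[
\hat{X}_{k_i}[k_i] \subseteq \hat{M}[k_i] \oplus \mathcal{B}(0,r) \subseteq M \oplus \mathcal{B}(0,r),
\]
since $\hat{M}$ only shrinks: it is intersected with the previous $\hat{M}$ at every update. Hence
\[
\mathrm{diam}(\hat{X}_{k_i}[k_i]) \le \mathrm{diam}(M) + 2r .
\]
Rearranging \eqref{eq:general-cond}, $\mathrm{diam}(M) + 2r$ is at most the right-hand side of \eqref{general_cond}. So the current estimate is small enough for recovery, uniformly in $i$; this is the "non-expanding bound" announced before the theorem. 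Note that $\hat{X}_0$ enters only through its nonemptiness and containment of $x_0$; the size control comes entirely from the landmark prior.

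\emph{Assembling the induction.} Suppose $y_{k_i}=1$. If $y_{k_i+1}=1$ as well, then $k_{i+1} = k_i + 1$ and we are done. Otherwise $k = k_i$ is a "last positive before loss" time in the sense of Section~\ref{sec:recovery}. Reset the time index there, taking $X_0 := \hat{X}_{k_i}[k_i]$ with $c_0$ the center of its smallest enclosing ball (Jung's theorem, as in Corollary~\ref{cor:recoverability_region}). Corollary~\ref{cor:recoverability_region_control} then guarantees that the generalized RCS applied in the for-loop of \textsc{ActiveLocalize} drives the state into $\mathcal{B}(m,r)$ within the $N\bar{n}$ steps of one loop execution. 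The loop breaks at the first such $j$, which defines $k_{i+1}$.

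One bookkeeping point needs care: the off-by-one between the arbitrary $u_0$ (the input $u_{k_i}$ applied while $y_{k_i}=1$) and the RCS indices $1,\dots,N\bar{n}$. I would handle this by folding the arbitrary input into the $\sum_k A^{\bar{n}i-k}Bu_k$ correction term of Definition~\ref{def:generalized_RCS}. Because that term cancels all previously applied inputs, the argument of Lemma~\ref{lem:recoverability_base} goes through unchanged.
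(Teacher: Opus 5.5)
Your proposal follows the paper's proof essentially step for step: you bound $\mathrm{diam}(\hat X_k[k])\le\mathrm{diam}(M)+2r$ via $\hat X_k[k]\subseteq \hat M\oplus\mathcal{B}(0,r)$ and the monotonicity of $\hat M$, rearrange \eqref{eq:general-cond} into \eqref{general_cond}, and apply Corollary~\ref{cor:recoverability_region_control} at every signal loss. One small slip, which the paper shares: the generalized RCS only guarantees entry at some time $\bar n i+1$ with $i\le N$ after the last positive time (in your loop notation, $k_{i+1}=k_i+1+j$ with $j\le N\bar n$), so the argument gives $k_{i+1}-k_i\le N\bar n+1$, i.e.\ at most $N\bar n$ intervening negative steps, rather than the $k_{i+1}-k_i\le N\bar n$ you state at the outset.
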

\begin{proof}
The argument proceeds as follows: We will show applying \hyperref[alg:EST]{\textsc{Estimate}} at any time $k$ such that $y_k = 1$ (\hyperref[alg:ACT-LOC]{\textsc{ActiveLocalize}} \hyperref[ACT-LOC:line5]{Line 5}), gives
$
\text{diam}(\hat{X}_k[k]) \leq \text{diam}(M) + 2r.
$
We will use this result together with the condition ~\eqref{eq:general-cond} to show that condition~\eqref{general_cond} holds with 
$X_0 := \hat{X}_{k}[k]$, where $k$ is such that $y_{k} = 1$ and $y_{k+1} = 0$. Then we fix some generalized RCS, $u_{1:N\bar{n}}$, to be applied starting from time $k$. By Corollary ~\ref{cor:recoverability_region_control}, the generalized RCS ensures that, after signal loss at time $k$, there exists an index $k'$ satisfying $k < k' \leq k + N\bar{n}$ for which $y_{k'} = 1$. Then, we can use the same argument perpetually to arrive the desired result.

By simple calculations, the condition \eqref{eq:general-cond} implies  
\begin{equation} \label{modif}
\text{diam}(M) + 2r \;\leq\; 
\frac{r(1-\eta)}{\bar{D}(3-\eta)} \sqrt{\frac{2(n+1)}{n}}.
\end{equation}
Corollary~\ref{cor:landmark_est} shows \hyperref[alg:EST]{\textsc{Estimate}} produces the estimate $\hat{M}[k+1]$ that over-approximates $m$. This is updated in \hyperref[EST:line8]{Line 8} of  \hyperref[alg:EST]{\textsc{Estimate}} where the previous estimate, $\hat{M}[k]$, is intersected with a bloated version of several reachable sets. On the other hand, estimate remains the same if $y_k =0$ (\hyperref[ACT-LOC:line11]{Line 11} of \hyperref[alg:ACT-LOC]{\textsc{ActiveLocalize}}). Then for all $k \ge 0$, we have 
\begin{equation} \label{recur}
\text{diam}\bigl(\hat{M}[k+1]\bigr) \;\le\; 
\text{diam}\bigl(\hat{M}[k]\bigr) 
\;\le\; \text{diam}(M).
\end{equation}
Similarly Corollary~\ref{cor:cur_state_est} shows \hyperref[alg:EST]{\textsc{Estimate}} produces the estimate $\hat{X}_{k}[k]$ that over-approximates $x_k$. This is updated in \hyperref[EST:line9]{Line 9} of \hyperref[alg:EST]{\textsc{Estimate}} where the reachable set at time $k$, is intersected with the bloated version of the landmark estimate $\hat{M}[k]$. Thus, we have

\[
x_k \in \hat{X}_k[k]
\subseteq \hat{M}[k]\oplus B(r)\implies \text{diam}(\hat{X}_k[k]) \leq \text{diam}(\hat{M}[k])+2r
\]
by the Minkowski sum property. Thus, for any $k$ such that $y_k = 1$, we can combine this with ~\eqref{recur} to obtain

\[
\text{diam}\bigl(\hat{X}_k[k]\bigr)
\;\le\; \text{diam}(M) + 2r.
\]
Therefore, \eqref{modif} together with the above inequality yields,
\begin{equation} \label{invariance}
\text{diam}\bigl(\hat{X}_k[k]\bigr)
\;\le\; 
\frac{r(1-\eta)}{\bar{D}(3-\eta)}\sqrt{\frac{2(n+1)}{n}},
\end{equation}
whenever $y_k=1$. Now consider any arbitrary control $u_k$ such that $y_{k+1}=0$. Then for that $k$, \eqref{invariance} is the sufficient condition to Corollary~\ref{cor:recoverability_region_control} that ensures there exists a sequence of generalized RCS $u_{1:N\bar{n}}$ such that $y_{k'} = 1$ for some $k'$, with $k < k' \le k + N\bar{n}$. Then by invariance, $y = 1$ occurs infinitely often, with successive occurrences separated by at most $N\bar{n}$ time steps.
\end{proof}

We have shown that $x_k \in \mathcal{B}(m,r)$ for infinitely many times. We now prove that the initial‐state uncertainty decays exponentially. Lemma~\ref{lem:init_state_est} shows for any $k\geq0$, $x_0\in \hat{X}_0[k]$ where $\hat{X}_0[k]$ is the intersection of the prior, $X_0$, with all ellipsoids that is computed in \hyperref[alg:EST]{\textsc{Estimate}} up to time $k$. To prove exponential decay, we focus on one ellipsoid: take $j=0$ ($j\in L_k$ since $y_0=1$) and let $d_k = \max L_k$.  
Then $\hat{X}_0[k] \subseteq \mathcal{E}(P_{d_k0},\mu_{d_k0})$ .  
The key step is to analyze how the diameter of $\mathcal{E}(P_{d_k0},\mu_{d_k0})$ behaves as $d_k \to \infty$.  
Proposition~\ref{prop:matrixnorm} provides a necessary matrix inverse bound, and Theorem~\ref{thm:exp_decay} shows this diameter decays exponentially.
\begin{proposition} \label{prop:matrixnorm}
Define $
\lambda_{\min} \;=\;\min_{i}\lvert\lambda_i(A)\rvert > 1$.
Then there exist constants \(K>0\) and $0<c<1$ such that
\[
\bigl\lVert (A^k - I)^{-1}\bigr\rVert_2 \;\le\; K\,\lambda_{\min}^{-ck} \quad \forall k >0.
\]
\end{proposition}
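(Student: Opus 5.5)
The plan is to first reduce to a bound on $A^{-k}$ and then handle the non-normal part of $A$ through a Jordan form.

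\textbf{Factorization.} Since every eigenvalue of $A$ has modulus larger than one, $A$ is invertible. We write
\[
A^k - I = A^k\bigl(I - A^{-k}\bigr).
\]
This gives
\[
(A^k-I)^{-1} = \bigl(I-A^{-k}\bigr)^{-1}A^{-k},
\]
and hence
\[
\|(A^k-I)^{-1}\|_2 \le \|(I-A^{-k})^{-1}\|_2\,\|A^{-k}\|_2 .
\]
The task therefore splits into two pieces: an exponential decay bound on $\|A^{-k}\|_2$, and a uniform bound on $\|(I-A^{-k})^{-1}\|_2$.

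\textbf{Decay of $A^{-k}$.} The eigenvalues of $A^{-1}$ are the reciprocals $\lambda_i(A)^{-1}$. Their moduli are at most $\lambda_{\min}^{-1}<1$, so the spectral radius satisfies $\rho(A^{-1}) = \lambda_{\min}^{-1}$. Using the Jordan decomposition $A^{-1}=TJT^{-1}$, we have
\[
\|A^{-k}\|_2 \le \|T\|\,\|T^{-1}\|\,\|J^k\|.
\]
Each Jordan block of size $s$ contributes entries of the form $\binom{k}{\ell}\mu^{k-\ell}$ with $\ell<s$ and $|\mu|\le\lambda_{\min}^{-1}$. This yields
\[
\|A^{-k}\|_2 \le K_1\,k^{n-1}\lambda_{\min}^{-k}.
\]
The polynomial factor is the reason the statement carries an exponent $c<1$. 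Fix any $c\in(0,1)$. Then $k^{n-1}\lambda_{\min}^{-(1-c)k}$ is bounded in $k$, since exponential decay beats polynomial growth. Absorbing it into the constant gives
\[
\|A^{-k}\|_2 \le K_2\,\lambda_{\min}^{-ck}\quad\text{for all } k\ge 1 .
\]
Equivalently, one could invoke Gelfand's formula $\|A^{-k}\|^{1/k}\to\lambda_{\min}^{-1}$ with a slightly weakened rate.

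\textbf{Uniform bound on $(I-A^{-k})^{-1}$.} This is the main obstacle, because we need a bound valid for all $k>0$, not just large $k$.

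For large $k$: by the previous step, $\|A^{-k}\|_2\to 0$. So there is $k_0$ with $\|A^{-k}\|_2\le 1/2$ for all $k\ge k_0$. A Neumann series argument then gives
\[
\|(I-A^{-k})^{-1}\|_2\le 2 \quad\text{for } k\ge k_0 .
\]

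For small $k$: for the finitely many $1\le k<k_0$, the matrix $A^k-I$ is invertible. Its eigenvalues are $\lambda_i^k-1$, which are nonzero because $|\lambda_i|^k>1$; this is the same observation as in the remark following Lemma~\ref{lem:init_state_est}. Each of these norms $\|(A^k-I)^{-1}\|_2$ is therefore finite.

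\textbf{Choosing $K$.} Set
\[
K=\max\Bigl\{\,2K_2,\ \max_{1\le k<k_0}\|(A^k-I)^{-1}\|_2\,\lambda_{\min}^{ck}\Bigr\}.
\]
With this choice the inequality $\|(A^k-I)^{-1}\|_2\le K\lambda_{\min}^{-ck}$ holds for every $k>0$.
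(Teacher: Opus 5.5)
Your proof is correct, and it takes a genuinely different route from the paper.

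\textbf{The paper's route.} The paper works block by block on $A=VJV^{-1}$. It writes $J_i^k-I=(\lambda_i^k-1)I+N'$ with $N'$ nilpotent and expands $(J_i^k-I)^{-1}$ as a finite Neumann series in $N'/(\lambda_i^k-1)$. It then bounds $\|N'\|_2$ by a polynomial in $k$ times $|\lambda_i|^{k-1}$ and controls the ratio $|\lambda_i|^{k-1}/|\lambda_i^k-1|\le(\lambda_{\min}-1)^{-1}$. This yields a single estimate of the form $\|(A^k-I)^{-1}\|_2\le \kappa\,C'\,k^{d}/(\lambda_{\min}^k-1)$, valid for all $k\ge 1$ at once. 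The polynomial factor is then absorbed into $\lambda_{\min}^{-(1-c)k}$, exactly as you do.

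\textbf{Your route.} You factor out $A^k$, so the only spectral input is the decay of $\|A^{-k}\|_2$, which, as you note, could come from Gelfand's formula alone. The remaining factor $(I-A^{-k})^{-1}$ is handled softly: a Neumann series for $k\ge k_0$, and finiteness for the finitely many $k<k_0$. For $k<k_0$ you actually bound $(A^k-I)^{-1}$ directly rather than $(I-A^{-k})^{-1}$ as advertised. That is harmless given how you define $K$. Alternatively, note that $I-A^{-k}$ is invertible for every $k$, so $\sup_k\|(I-A^{-k})^{-1}\|_2<\infty$.

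\textbf{What each buys.} Your argument is more modular and works verbatim in any submultiplicative norm. It also avoids the bookkeeping with binomial coefficients, powers of $N'$, and the ratio estimate. The price is a case split with a threshold $k_0$, and a constant $K$ defined through a finite maximum rather than in closed form. You can make $k_0$ explicit by taking the least $k$ with $K_2\lambda_{\min}^{-ck}\le 1/2$. The paper's bound needs no case split and makes the dependence on $\kappa=\|V\|_2\|V^{-1}\|_2$, the Jordan block sizes, and $\lambda_{\min}$ explicit. Both arguments deliver the statement for every fixed $c\in(0,1)$, with $K$ depending on $c$.
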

\begin{proof}
See Appendix~\ref{appendix:matrixnormproof}.
\end{proof}
The above proposition will be used to provide a bound on the ellipsoid $\mathcal{E}(\mu_{k0}, P_{k0})$. 
The next objective is to extend this result by replacing the index $k$ with $d_k$. 
From Theorem~\ref{thm:main_thm}, which ensures that $y_k = 1$ occurs at least once every $N\bar{n}$ steps, it follows that $d_k \to \infty$ as $k \to \infty$ when \hyperref[alg:ACT-LOC]{\textsc{ActiveLocalize}} is used.
Consequently, this allows us to bound the ellipsoid $\mathcal{E}(\mu_{d_{k}0}, P_{d_{k}0})$ as well.

\begin{theorem} \label{thm:exp_decay}
Assume that condition \eqref{eq:general-cond} holds. 
Then, \hyperref[alg:ACT-LOC]{\textsc{ActiveLocalize}} achieves exponential contraction of the initial state estimate:
\begin{equation} \label{contraction}
\text{diam}(\hat{X}_0[k]) \le \min\{C\,a^k,\;\text{diam}(X_0)\}
\quad\forall k\ge0,
\end{equation}
for some finite $C>0$ and $0<a<1$ that depend on \eqref{eq:system_dynamics}.
\end{theorem}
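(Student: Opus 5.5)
The plan is to bound $\hat{X}_0[k]$ by a single ellipsoid, namely the one pairing time $0$ with the most recent positive measurement. Because $y_0=1$, we have $0\in L_k$ for every $k$. Let $d_k=\max L_k$. Lemma~\ref{lem:init_state_est}, together with the intersection in Line~5 of \textsc{Estimate}, gives $\hat{X}_0[k]\subseteq \mathcal{E}(\mu_{d_k0},P_{d_k0})$ whenever $d_k>0$. The set $\hat{X}_0[k]$ is also never enlarged, so $\hat{X}_0[k]\subseteq X_0$. This yields the trivial branch $\mathrm{diam}(\hat{X}_0[k])\le \mathrm{diam}(X_0)$ of the minimum in~\eqref{contraction}.

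The next step is to compute the diameter of that ellipsoid. With $P_{d_k0}=(A^{d_k}-I)^\top(A^{d_k}-I)/(4r^2)$, the ellipsoid $\mathcal{E}(\mu,P)$ is the image of the unit ball under $P^{-1/2}$. Equivalently, it is $\mu+2r(A^{d_k}-I)^{-1}\mathcal{B}(0,1)$. Its diameter is therefore
\[
4r\,\bigl\|(A^{d_k}-I)^{-1}\bigr\|_2 \;\le\; 4rK\,\lambda_{\min}^{-c\,d_k}
\]
by Proposition~\ref{prop:matrixnorm}.

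It then remains to relate $d_k$ to $k$. I will use Theorem~\ref{thm:main_thm}, which applies because \eqref{eq:general-cond} holds. It guarantees that consecutive positive measurements are at most $N\bar{n}$ steps apart, so $d_k\ge k-N\bar{n}$ for all $k$. Substituting this gives
\[
\mathrm{diam}(\hat{X}_0[k])\le 4rK\lambda_{\min}^{cN\bar{n}}\,\bigl(\lambda_{\min}^{-c}\bigr)^{k}.
\]
Set $C=4rK\lambda_{\min}^{cN\bar{n}}$ and $a=\lambda_{\min}^{-c}\in(0,1)$. For small $k$, in particular before the first positive measurement after time $0$ when $d_k=0$ and no ellipsoid is available, the bound $\mathrm{diam}(X_0)$ covers the estimate. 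Since $Ca^k\ge 4rK$ in that regime, enlarging $C$ if needed to $\max\{C,\mathrm{diam}(X_0)/a^{N\bar{n}}\}$ makes $Ca^k\ge\mathrm{diam}(X_0)$ there, and taking the minimum gives \eqref{contraction} for all $k\ge0$.

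The main obstacle is the bookkeeping that makes $\hat{X}_0[k]\subseteq\mathcal{E}(\mu_{d_k0},P_{d_k0})$ hold at every $k$, not only at positive-measurement times. \textsc{ActiveLocalize} carries $\hat{X}_0$ forward unchanged during recovery: in Line~11, and in steps where $y=0$, the estimate is simply copied. I will check that the estimate held at time $k$ equals the one computed at $d_k$, so the inclusion transfers. I will also verify that \textsc{Estimate} is invoked at $d_k$ with $0\in L$. If $L$ in the algorithm is only populated from the first loop iteration, this holds because the initial observation $y_0=1$ inserts $0$. A secondary point is to confirm that Proposition~\ref{prop:matrixnorm} is the operative bound, since $A^{d_k}-A^0=A^{d_k}-I$.
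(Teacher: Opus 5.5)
Your proposal is correct and follows essentially the same route as the paper. You bound $\hat{X}_0[k]$ by the single ellipsoid $\mathcal{E}(\mu_{d_k0},P_{d_k0})$, compute its diameter as $4r\|(A^{d_k}-I)^{-1}\|_2$, apply Proposition~\ref{prop:matrixnorm}, and use $d_k\ge k-N\bar n$ from Theorem~\ref{thm:main_thm} to obtain the same $C=4rK\lambda_{\min}^{cN\bar n}$ and $a=\lambda_{\min}^{-c}$; your enlargement of $C$ to cover the regime $d_k=0$ (where no ellipsoid exists and Proposition~\ref{prop:matrixnorm} does not apply) usefully patches a case the paper's proof passes over.
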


\begin{proof}
Since, for any $k\geq0$, $d_k$ cannot be constant for \(N\bar{n}+1\) consecutive times, there always exists some index 
\(j \in [\,k-N\bar{n},\,k-1]\) such that \(d_{k} = j+1\). Then we have,
\begin{equation} \label{seq:bound}
d_k \ge k-N\bar{n} .
\end{equation}
By Lemma ~\ref{lem:init_state_est}, we  have  $x_0 \in \hat{X}_0[k] \subseteq \bigcap_{j \in L_k,\; j < d_k} \mathcal{E}(\mu_{d_k j}, P_{d_k j})$ 
which implies $\operatorname{diam}(\hat{X}_0[k]) \le \operatorname{diam}(\mathcal{E}(\mu_{d_k 0}, P_{d_k 0}))$ 
for all $k \ge 0$, where this ellipsoid is computed in $\hyperref[alg:EST]{\textsc{Estimate}}$ \hyperref[EST:line5]{Line 5} at the first iteration of the loop. Now we invoke Proposition ~\ref{prop:matrixnorm} and show that
\[
\begin{aligned}
\text{diam}\bigl(\hat{X}_0[k]\bigr)
&\leq \text{diam}\bigl(\mathcal{E}(\mu_{d_k0},P_{d_k0})\bigr) \\
&= \sup_{x,y\in\mathcal{E}(\mu_{d_k0},P_{d_k0})}\|x - y\|
= 2\sup_{\|(A^{d_k} - I)z\|\le 2r}\|z\| \\
&= 4r\,\bigl\|(A^{d_k} - I)^{-1}\bigr\|_2
\;\le\; 4rK\,\lambda_{\min}^{-c d_k}.
\end{aligned}
\]
where the first equality comes from the definition of diameter, the second comes from converting the supremum on diameter to supremum on the radius, and the third comes from the definition of operator norm with normalization and the last inequality is a direct consequence of Proposition ~\ref{prop:matrixnorm}. We also have $x_0 \in X_0$, which gives
\[
\begin{aligned}
&\text{diam}\bigl(\hat{X}_0[k]\bigr) 
\le \min\!\left\{4rK\,\lambda_{\min}^{-c d_k},\, \text{diam}(X_0)\right\} \\
&\le \min\!\left\{4rK\,\lambda_{\min}^{-c (k-N\bar{n})},\, \text{diam}(X_0)\right\}
   = \min\!\left\{Ca^k,\, \text{diam}(X_0)\right\}
\end{aligned}
\]
as \(d_k \geq k-N\bar{n}\) by \eqref{seq:bound}, where 
\(C := 4rK\,\lambda_{\min}(A)^{cN\bar{n}}\) and \(a := \lambda_{\min}(A)^{-c}\).  
Since \(\lambda_{\min}(A)^{c} > 1\), it follows that \(0 < a < 1\).
\end{proof}

\section{Numerical Examples} \label{sec:experiments}

We consider the discrete-time linear system in~\eqref{eq:system_dynamics} with randomly initialized $A, B \in \mathbb{R}^{2\times 2}$, and focus on two randomly obtained systems that are controllable and satisfy~\eqref{eq:general-cond}. \textbf{Setup~1} has minimum absolute eigenvalue $\lambda_{\min} = 1.014$ (plotted in red), and \textbf{Setup~2} has $\lambda_{\min} = 1.01$ (plotted in blue); in both cases we take $r = 2$. The initial state set $X_0$ is a hyperrectangle of side length $3.5$ centered at $[0.2,\,-0.2]$, and the landmark uncertainty set $M$ is a hyperrectangle of side length $1.0$ centered at $[0.5,\,0.5]$. We ran $40$ independent trials of \hyperref[alg:ACT-LOC]{\textsc{ActiveLocalize}}, each initialized with $x_0 \sim \mathrm{Unif}(X_0)$ and $m \sim \mathrm{Unif}(M)$, and simulated each trial for $500$ iterations. Because the closed-loop evolution is deterministic once initialized, at each iteration we report the minimum and maximum diameter across the $40$ trials rather than a standard deviation band.

\begin{figure}[h]
  \centering
  \includegraphics[width=0.9\columnwidth]{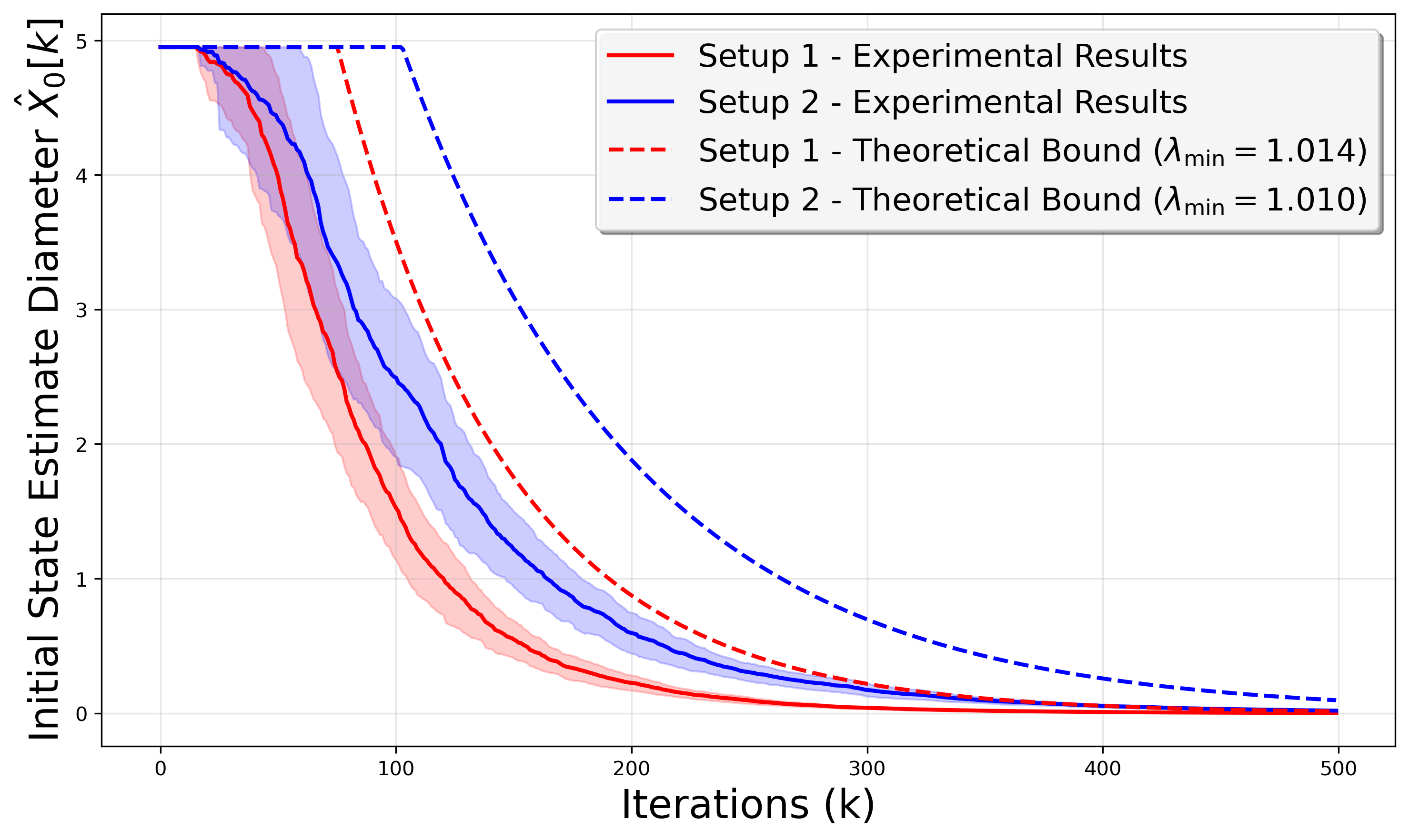}
  \Description{Line plot showing the diameter of the initial-state estimate decreasing over iterations, with a red empirical curve and a blue theoretical bound.}
  \caption{Evolution of $\operatorname{diam}(\hat{X}_0[k])$: 
the dark red and dark blue curves represent the mean trajectories, 
and the red and blue envelopes indicate the min--max ranges 
across 40 trials for the two systems. 
The dashed lines denote the corresponding theoretical bounds.}
  \label{fig:estimation}
\end{figure}

\begin{figure}[t]
    \centering
    \includegraphics[width=0.9\linewidth]{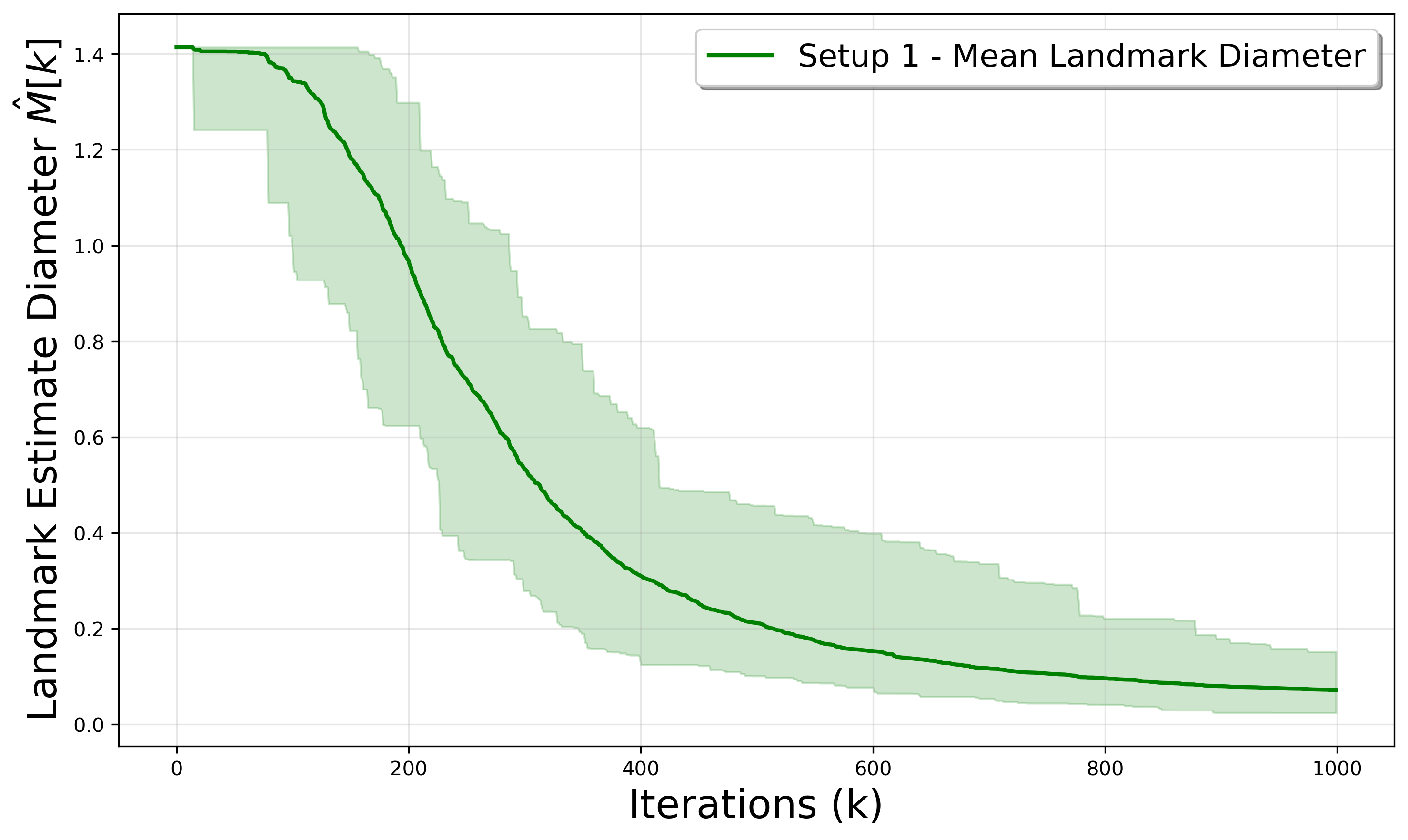}
    \caption{Evolution of $\operatorname{diam}(\hat{M}[k])$: 
    the dark green curve shows the mean trajectory, 
    and the green envelope indicates the min--max range 
    across 40 trials for Setup~1.}
    \label{fig:landmark_error}
\end{figure}
Figure~\ref{fig:estimation} shows that the diameter of the initial-state uncertainty decreases rapidly and monotonically, staying below the theoretical bound, while the spread between the minimum and maximum bounds remains small across trials. The first setup exhibits a sharper decay than Setup~2, consistent with its larger $\lambda_{\min}$. Moreover, the theoretical bounds in Figure~\ref{fig:estimation} appear slightly coarse relative to the empirical results. This stems from the lower bound in \eqref{seq:bound}, which assumes a worst case where recovery occurs only at the $N\bar{n}$-th step. In practice, recovery happens earlier, allowing more frequent executions of \hyperref[alg:EST]{\textsc{Estimate}} and thus a faster reduction in uncertainty.

\paragraph{Estimating the Landmark.}
We have not derived theoretical results for the convergence rate of $\hat{M}[k]$, 
but from~\eqref{recur} we know that it is nonexpanding. 
However, $\hat{M}[k]$ may not converge to a singleton. 
From Theorem~\ref{thm:exp_decay}, we know that the initial state estimate $\hat{X}_0[k]$ 
shrinks exponentially to a singleton. 
Thus, referring to \hyperref[EST:line9]{Line 9} of \hyperref[alg:EST]{\textsc{Estimate}}, for large $k$, the reachable sets $\textsc{Reach}\!\left(A, B,u_{0:j-1}, X_0'\right)$ 
for $j \ll k$ are very small in size, and the set 
$\textsc{Reach}\!\left(A, B, u_{0:j-1}, X_0'\right) \oplus \mathcal{B}(0, r)$ 
is approximately a ball of radius $(r+\varepsilon)$, for some small $\varepsilon>0$. 
If an oracle control law continuously stabilizes $x_k$ at a fixed location within the sensing region, so that the measurements are persistently available, then $\hat{M}[k]$ would stabilize around the intersection 
$\mathcal{B}(x_k, r+\varepsilon) \cap M$. 
This illustrates that without sufficient exploration, $\hat{M}[k]$ can stagnate around a bounded region rather than collapsing to a point. 
However, if the trajectory generated by arbitrary controls sufficiently explores the sensing region, these successive intersections are expected to progressively contract the landmark estimate. 
We present empirical results from 40 experiments conducted using Setup~1 shown in Figure~\ref{fig:landmark_error} over 1000 iterations. 
Empirically, $\hat{M}[k]$ continues to shrink as $k$ increases, but at a slower rate than $\hat{X}_0[k]$.

\section{Conclusion}

In this work, we presented a theoretical framework for active localization of unstable linear systems under coarse, single-bit sensing. Our analysis established the fundamental limits of localization in terms of instability and landmark uncertainty. Contrary to intuition, we have shown that instability can actually facilitate localization. We also provided an algorithm guaranteed to estimate the initial state with an exponential convergence rate and validated our results through experiments using different setups. Future work includes extending the analysis to account for model mismatch and system disturbances, determining the limits of localization in more general systems such as nonlinear and hybrid systems, and exploring more complex sensor models---such as keyframes and segmentations.

\bibliographystyle{ACM-Reference-Format}
\bibliography{software,sample-base}

\appendix

\section{Proofs of Propositions and Corollaries}
\subsection{Proof of Proposition~\ref{prop:SVP_exist}}
\label{appendix:SVP_existproof}

\begin{proof}
For each $p \in \mathbb{S}^{n-1}$, define the open cap
\[
C(p,\alpha)=\{\,x\in \mathbb{S}^{n-1} : \langle x,p\rangle > \alpha \,\}.
\]
For every \(x\in S^{n-1}\), $\alpha \in [0,1)$ we have \(x\in C(x,\alpha)\). Thus the family \(\{C(p,\alpha)\}_{p\in S^{n-1}}\) covers the compact n-dimensional sphere, meaning there exists a finite sub-cover
\[
\mathbb{S}^{n-1} = \bigcup_{i=1}^N C(p_i,\alpha).
\]
Any of these finitely many sub-covers with the minimum number of basis vectors form an $\alpha$-SVP.
\end{proof}

\subsection{Proof of Proposition~\ref{prop:SVP_bound}}
\label{appendix:SVP_boundproof}

\begin{proof}
Without loss of generality we work on $\alpha$-SVP of $\scalebox{1.2}{$\partial$}\mathcal{B}(0,1)$. (apply \(x\mapsto x-c\) if needed). First, we show \(\langle w,p_i\rangle\ge\alpha \ \forall w \in R_i\). Suppose for contradiction there is a \(w\in R_i\) with \(\langle w,p_i\rangle<\alpha\).  Coverage of \(\mathbb{S}^{n-1}\) by the caps \(C(p_j,\alpha)=\{w:\langle p_j,w\rangle\ > \alpha\}\) forces \(w\in C(p_j,\alpha)\) at least for one \(j\neq i\), so \(\langle p_j,w\rangle\ >\alpha>\langle p_i,w\rangle\), contradicting \(w\in R_i\) by \eqref{eq:svp_def}.  Hence \(R_i\subset C(p_i,\alpha)\) and \(\langle w,p_i\rangle\ge\alpha\) for all \(w\in R_i\). Now let \(w_1,w_2\in R_i\) and decompose these vectors in two orthogonal directions:
\[
w_k = \langle w_k,p_i\rangle\,p_i + z_k,\text{where} \ z_k\perp p_i \ \text{for} \ k=1,2
\]
$w_k \in R_i$ implies $\|w_k\| = 1$, hence $\|z_k\|=\sqrt{1-\langle w_k,p_i\rangle^2}$. Then by Cauchy-Schwarz:
\[
\langle w_1,w_2\rangle
= \langle w_1,p_i\rangle\,\langle w_2,p_i\rangle + \langle z_1,z_2\rangle
\;\ge\;
a_1a_2 - \|z_1\|\|z_2\|,
\]
with \(a_k=\langle w_k,p_i\rangle\ge\alpha\).  A quick check shows that the minimum occurs at the boundary, i.e., $a_1 = a_2 = \alpha$, yielding
\[
\langle w_1,w_2\rangle \;\ge\; 2\alpha^2 - 1.
\]
\end{proof}

\subsection{Proof of Proposition~\ref{prop:SVP_ball}}
\label{appendix:SVP_ballproof}

\begin{proof}
Let \(x\in \mathcal{B}(m,r)\) and set
\[
v \;=\; \frac{m - x}{r}.
\]
If \(v=0\), then \(m=x\) and \(int(R_i)\subseteq \mathcal{B}(m,r)\) for all i holds immediately.  Otherwise \(0<\|v\|\le1\), and we choose \(i\) so that the normalized $v$, which we call $\hat{v}$ lies in \(\tilde{R}_i\) where \(\tilde{R}_i\) is one of the normalized partitions. Such an $i$ can always be chosen because \(\bigcup_i\tilde{R}_i=\mathbb{S}^{n-1}\). Now let \(y\in R_i\) be arbitrary, and define
\[
z \;=\; \frac{y - x}{r}.
\]
Then \(\|z\|=1\) and \(z\in \tilde{R}_i\).  By Proposition ~\ref{prop:SVP_bound}, $\langle z,\,\hat v\rangle \geq 2\alpha^2-1 = 2 (\frac{\sqrt{3}}{2})^2-1=  \frac{1}{2}$. Hence
\[
\langle z,\,v\rangle
= \|v\|\;\langle z,\hat v\rangle
\;\ge\;\tfrac12\,\|v\|
\;\ge\;\tfrac12\,\|v\|^2.
\]
as $\|v\|\leq1$. Since \(\|z\|=1\), a direct calculation using the above inequality gives
$
\|z - v\|^2
= \|z\|^2 + \|v\|^2 - 2\langle z,v\rangle
\;\le\;1,
$
so \(\|z - v\|\le1\).  Therefore
$
\|y - m\|
= \|\,r z - r v\|
= r\,\|z - v\|
\;\le\;r,
$ thus \(y\in \mathcal{B}(m,r)\). Because \(y\) was arbitrary, \(R_i\subseteq \mathcal{B}(m,r)\). Since we are given $x \in \mathcal{B}(m,r)$ and the ball $\mathcal{B}(m,r)$ is convex, we conclude that $int(R_i)\subseteq \mathcal{B}(m,r)$.
\end{proof}
\subsection{Proof of Corollary~\ref{cor:recoverability_region}}
\label{appendix:recoverability_regionproof}

\begin{proof}
Let \(c_0\) be the center of the smallest enclosing ball of \( X_0\).  By Jung’s theorem there exists
\[
r_J = \sqrt{\frac{n}{2(n+1)}}\,\text{diam}( X_0)
\]
such that \( X_0\subset \mathcal{B}(c_0,r_J)\). By \eqref{cor_cond}, we obtain
\[
r_J
= \sqrt{\frac{n}{2(n+1)}}\text{diam}( X_0)
\le \frac{r(1-\eta)}{D(3-\eta)},
\]
and the condition \eqref{eq:diam_assump} in Lemma \ref{lem:recoverability_base} hold with \(r_0=r_J\).  Since \(X_0 \subseteq \mathcal{B}(c_0,r_J)\), the control \eqref{eq:control_def} ensures that for every
\(x_0\in X_0\cap \mathcal{B}(m,r)\) there is some \(i\) with \(x_{i+1}\in \mathcal{B}(m,r)\).
\end{proof}

\subsection{Proof of Corollary~\ref{cor:recoverability_region_control}}
\label{appendix:recoverability_region_controlproof}

\begin{proof}
Unrolling \(x_{k+1}=A\,x_{k}+B\,u_k\) from \(k=0\) to \(k=\bar{n}i+1\) gives
\begin{equation}\label{eq:unroll_AB}
x_{\bar{n}i+1}
= A^{\,\bar{n}i+1}x_{0}
+ \sum_{k=0}^{\bar{n}i}A^{\,\bar{n}i-k}B\,u_k.
\end{equation}
Split the sum at \(k=(i-1)\bar{n}\) and, by setting \(j = k - (i-1)\bar{n}\) in the second sum, we obtain:
\begin{equation} \label{changevar}
\sum_{k=0}^{\bar{n}i}A^{\,\bar{n}i-k}B\,u_k
=\sum_{k=0}^{(i-1)\bar{n}}A^{\,\bar{n}i-k}B\,u_k
+\sum_{j=1}^{\bar{n}}A^{\,\bar{n}-j}B\,u_{(i-1)\bar{n}+j}.
\end{equation}
Substitute \eqref{eq:control_def_AB} into \eqref{changevar}. By a change of variables from $\bar{n}-j$ to $j$ for the second summation on RHS of \eqref{changevar}, the sum becomes
\begin{equation}\label{eq:sum_expand}
\begin{aligned}
&\sum_{j=1}^{\bar{n}} A^{\bar{n}-j}B\,u_{(i-1)\bar{n}+j} =
\sum_{j=1}^{\bar{n}} A^{\bar{n}-j}B\,(A^{\bar{n}-j}B)^{\top} W^{-1}\Bigl[(r-\\
&\;\|A^{\bar{n}i+1}-I\|R_0)p_i \; - (A^{\bar{n}i+1}-I)c_0 - \sum_{k=0}^{(i-1)\bar{n}} A^{\bar{n}i-k}B\,u_k \Bigr].
\end{aligned}
\end{equation}
With the change of variables \(k=\bar{n}-j\), the outer sum becomes
\[
\sum_{j=1}^{\bar{n}} 
   A^{\bar{n}-j}B\,(A^{\bar{n}-j}B)^{\top}
= \sum_{k=0}^{\bar{n}-1} A^{k}B B^{\top}(A^{k})^{\top}
= W.
\]
Therefore $W$ cancels with $W^{-1}$ and \eqref{eq:sum_expand} reduces to
\[
(r-\|A^{\bar{n}i+1}-I\|R_0)p_i 
- (A^{\bar{n}i+1}-I)c_0 
- \sum_{k=0}^{(i-1)\bar{n}} A^{\bar{n}i-k}B\,u_k.
\]
  Thus substituting the above into \eqref{changevar}, then \eqref{changevar} into \eqref{eq:unroll_AB} gives
\[
x_{\bar{n}i+1}
= A^{\,\bar{n}i+1}x_{0}
+ \bigl(r - \|A^{\,\bar{n}i+1}-I\|\,R_{0}\bigr)\,p_{i}
- \bigl(A^{\,\bar{n}i+1}-I\bigr)\,c_{0}.
\]
This coincides with \eqref{eq:grouped} up to a scaling of the time-indices by multiples of \(\bar{n}\).  Moreover, replacing \(D\) by \(\bar D\) sets \eqref{general_cond} as a sufficient condition to repeat the same reasoning between \eqref{eq:grouped}–\eqref{eq:in_cell}. Combining the result from Corollary~\ref{cor:recoverability_region} as well, we complete the proof.
\end{proof}

\subsection{Proof of Proposition~\ref{prop:matrixnorm}}
\label{appendix:matrixnormproof}

\begin{proof}
The proof expresses $A$ in its Jordan form and analyzes a single Jordan block, 
as the argument extends by taking maximum.
By binomial expansion, we show powers' of Jordan blocks also decompose into a diagonal and nilpotent part.
Then we expand inverse of their difference from identity matrix via the Neumann series, showing that $
\|(A^k - I)^{-1}\|_2$ decays as a polynomial in $k$ times $(\lambda_{\min}^k - 1)^{-1}$.
This product is then bounded by a slower exponential 
$\lambda_{\min}^{-c k}$ for some $0 < c < 1$. Consider the Jordan form of \(A\) and set
\[
A = V J V^{-1}, \quad J = \bigoplus_i J_i, \quad J_i = \lambda_i I + N \ \forall i, \quad N^{d_i} = 0
\]

where \(\bigoplus_i J_i\) denotes the direct sum of Jordan blocks \(J_i\), each of block size \(d_i \times d_i\). Since $J$ is a direct sum of Jordan blocks, $J_i$, it is sufficient to consider the behavior of powers of a single Jordan block. We can show by binomial expansion that for each \(i\), \(J^k_i\) is also composed of a diagonal plus nilpotent part,
\begin{align*}
J_i^k &= (\lambda_i I + N)^k 
       = \lambda_i^k I + \sum_{j=1}^{d_i-1} \binom{k}{j} \lambda_i^{\,k-j} N^j \\
      &:= \lambda_i^k I + N' 
       \implies J_i^k - I = (\lambda_i^k - 1)I + N'.
\end{align*}
where \((N')^{d_i} = 0\) since each term in \(N'\) contains at least one factor of \(N\). Thus, the Neumann series expansion gives
\[
(J_i^k - I)^{-1} = \sum_{j=0}^{d_i-1} \frac{(-1)^j}{(\lambda_i^k - 1)^{j+1}} (N')^j.
\]
Considering the summation form of $N'$, each term includes a binomial factor 
$\binom{k}{j} = O(k^j)$ for $j \le d_i - 1$, leading to at most polynomial growth in $k$. 
Since $|\lambda_i|^{\,k-1} > |\lambda_i|^{\,k-j}$ for all $j \ge 1$, we may uniformly bound 
the exponential part by $|\lambda_i|^{\,k-1}$. Hence, $
\|N'\|_2 \le C_i\, k^{\,d_i-1} |\lambda_i|^{\,k-1}$, where $C_i > 0$ depends only on $d_i$ and $\|N\|$. By submultiplicativity this gives $\|(N')^j\|_2 \le C\,k^{\,j(d_i-1)}\,|\lambda_i|^{\,kj-j}$. Therefore,
\[
\label{eq:neumann-step}
\|(J_i^k - I)^{-1}\|_2
\le \sum_{j=0}^{d_i-1} \frac{\|(N')^j\|_2}{\gamma^{\,j+1}} 
\le C\,k^{{d_i}^2}\,\gamma^{-1}\sum_{j=0}^{d_i-1}\Bigl(\tfrac{|\lambda_i|^{\,k-1}}{\gamma}\Bigr)^{j} 
\]
where $\gamma = |\lambda_i^k - 1|$. We now upper-bound the term inside the summation at RHS of the second inequality above:  
\[
\begin{aligned}
\frac{|\lambda_i|^{k-1}}{\gamma} \le \frac{1}{|\lambda_i| - |\lambda_i|^{-(k-1)}}
\le {(\lambda_{\min} - 1)}^{-1}.
\end{aligned}
\]
where the first inequality comes from cancelling $|\lambda_i|^{k-1}$ terms plus reverse triangle inequality, and the last from $|\lambda_i|  \ge \lambda_{\min}>1$. Thus,
\[
\|(J_i^k - I)^{-1}\|_2
\le \frac{C\,k^{{d_i}^2}D_i}{\lambda_{\min}^k-1},
\ \text{where} \ D_i := \sum_{m=0}^{d_i-1}(\lambda_{\min}-1)^{-m}. \label{eq1}
\]
Then $\bigl\lVert(A^k - I)^{-1}\bigr\rVert_2 
\;\le\;
\kappa\max_i\bigl\lVert(J_i^k - I)^{-1}\bigr\rVert_2$ with \(\kappa = \|V\|_2\|V^{-1}\|_2\). Combining this with the previous equation, we get
\[
\bigl\lVert(A^k - I)^{-1}\bigr\rVert_2
\;\le\;
\frac{\kappa\,D\,C\,k^d}{\lambda_{\min}^k - 1}
\;\le\;
K\,\lambda_{\min}^{-c k},
\quad \forall k >0.
\]
where $D = \max_i D_i$ and $d = \max_i d_i^2$. The last inequality holds for some constant $K>0$ after fixing $c\in(0,1)$, since the preceding expression can be uniformly bounded over all $k>0$.
\end{proof}

\end{document}